\documentclass[11pt]{article}
\usepackage[final]{acl}

\usepackage{times}
\usepackage{latexsym}

\usepackage[T1]{fontenc}

\usepackage[utf8]{inputenc}

\usepackage{microtype}

\usepackage{inconsolata}

\usepackage{graphicx}
\usepackage{subcaption}

\usepackage{amsfonts}
\usepackage{amsmath}
\usepackage{amsthm}
\newtheorem{theorem}{Theorem}

\usepackage{algorithm}
\usepackage{algorithmic}
\usepackage{amssymb}
\usepackage{booktabs}
\usepackage[table]{xcolor}
\usepackage{multirow}
\usepackage{arydshln}
\usepackage{hyperref}
\usepackage{stfloats}
\title{Difficulty-Adaptive Tree-Structured Policy Optimization\\
for Expanding Reasoning Coverage in RLVR}

\author{
  Youngjun Yu \qquad
  Sanghwan Jang \qquad
  Hwanjo Yu\thanks{Corresponding author.} \\
  Pohang University of Science and Technology (POSTECH) \\
  \texttt{\{colin31472, s.jang, hwanjoyu\}@postech.ac.kr}
}

\begin{document}
\maketitle
\begin{abstract}
Reinforcement Learning with Verifiable Rewards (RLVR) has been central to the recent success of Large Reasoning Models. However, while RLVR significantly improves single-sample accuracy, it often fails to expand the model's intrinsic reasoning coverage (pass@$k$) due to limited exploration during training. To address this, we optimize the structural design of train-time rollouts to enhance pass@$k$. Our analysis identifies three key design principles: (1) difficulty-adaptive rollout can play an important role in expanding pass@k, beyond serving as an efficiency heuristic; (2) tree-based rollout outperforms parallel sampling in discovering correct answers; and (3) sentence-entropy-guided forking overcomes the \textit{localization} phenomenon of token-level branching to maximize semantic diversity. Building on these insights, we propose \textbf{DATPO} (\textbf{D}ifficulty-\textbf{A}daptive Sentence-entropy-guided \textbf{T}ree-structured \textbf{P}olicy \textbf{O}ptimization). DATPO integrates difficulty-adaptive tree search with a \textit{sibling-diversity} advantage term, explicitly promoting semantic diversity to expand reasoning coverage during training. Experiments on mathematical reasoning benchmarks demonstrate that DATPO outperforms baselines especially in pass@$k$, which directly translates to superior test-time scaling performance.\footnote{Code: \url{https://github.com/colin31472/DATPO}}
\end{abstract}

\section{Introduction}
The paradigm of Large Reasoning Models (LRMs) has achieved remarkable success in eliciting rigorous problem-solving capabilities from foundation models \citep{jaech2024openai, kimiteam2025kimik15scalingreinforcement}. A key mechanism behind this is Reinforcement Learning with Verifiable Rewards (RLVR), exemplified by DeepSeek-R1 \citep{guo2025deepseek} through its successful application of the Group Relative Policy Optimization (GRPO) \citep{shao2024deepseekmath}. 

In parallel, LRM research has increasingly emphasized test-time scaling strategies, including CoT \citep{wei2022chain}, majority voting \citep{wang2022self}, best-of-N \citep{stiennon2020learning}, and Monte Carlo Tree Search (MCTS) \citep{ha-etal-2025-dsg}. Fundamentally, the effectiveness of these methods is limited by the model's intrinsic reasoning coverage---the breadth of valid reasoning paths the model can explore, typically measured by pass@$k$. Without sufficient reasoning coverage, even large candidate sets are unlikely to include a correct reasoning path, causing test-time scaling to aggregate or select among recurring errors \citep{brown2024large, zhao2025sample}.

However, recent studies \citep{yue2025does, dang2025assessing, wu2025invisible} argue that RLVR often fails to unlock new reasoning capabilities beyond the reasoning coverage of the base model. This limitation stems from a lack of explicit exploration strategies that steer the learning or sampling process toward underexplored regions. Without such guidance, the model merely exploits known solutions rather than discovering novel ones.

While recent studies introduce exploration strategies to expand reasoning coverage \citep{walder2025pass, yao2025diversity, wang2025beyond}, they largely focus on optimization-level interventions, leaving the train-time rollout as a fixed parallel structure. Tree-based frameworks \citep{hou2025treerl, liu2025attention} have begun to move beyond standard parallel sampling. However, these methods primarily utilize tree structures for credit assignment or computational efficiency. Consequently, it remains underexplored which structural choices in train-time rollouts actually expand the model's reasoning coverage.

To bridge this gap, we conduct a systematic empirical analysis of train-time rollouts, revealing three core design principles to maximize reasoning coverage, extending beyond improvements in single-sample accuracy. First, \textbf{difficulty-adaptive rollout is an important factor in expanding pass@k, rather than merely an efficiency heuristic.} While increasing the rollout budget improves pass@$k$ when training on hard problems, we observe that it can actually degrade pass@$k$ when training on easy problems. Second, \textbf{tree-based rollout outperforms parallel sampling} in discovering correct answers by efficiently utilizing limited computational budgets. Finally, we observe that conventional token-guided forking suffers from \textit{localization}, where high-entropy tokens cluster in narrow segments and trap exploration. To overcome this, \textbf{forking decisions must be elevated to a broader semantic level} to discover genuinely diverse reasoning paths.

Building on these insights, we propose \textbf{DATPO} (\textbf{D}ifficulty-\textbf{A}daptive Sentence-entropy-guided \textbf{T}ree-structured \textbf{P}olicy \textbf{O}ptimization). For train-time exploration, DATPO combines the structural advantages of tree-based search with difficulty-adaptive rollout to expand reasoning coverage. Forking points are selected using sentence-level entropy signals, broadening the granularity of uncertainty estimation to avoid \textit{localization}. Furthermore, to maximize coverage-expanding benefits, we augment the advantage function with an annealed \textit{sibling-diversity} term. This explicitly rewards semantically diverse reasoning branches, encouraging the model to explore a wider range of valid reasoning paths.

Extensive experiments on mathematical reasoning benchmarks demonstrate the effectiveness of DATPO, which achieves the highest average pass@$k$ among the compared methods. Furthermore, this expanded reasoning coverage directly translates to superior test-time scaling with majority voting (maj@$k$).

\begin{figure*}[t]
    \centering
    \begin{subfigure}[b]{0.32\textwidth}
        \centering
        \includegraphics[width=\linewidth]{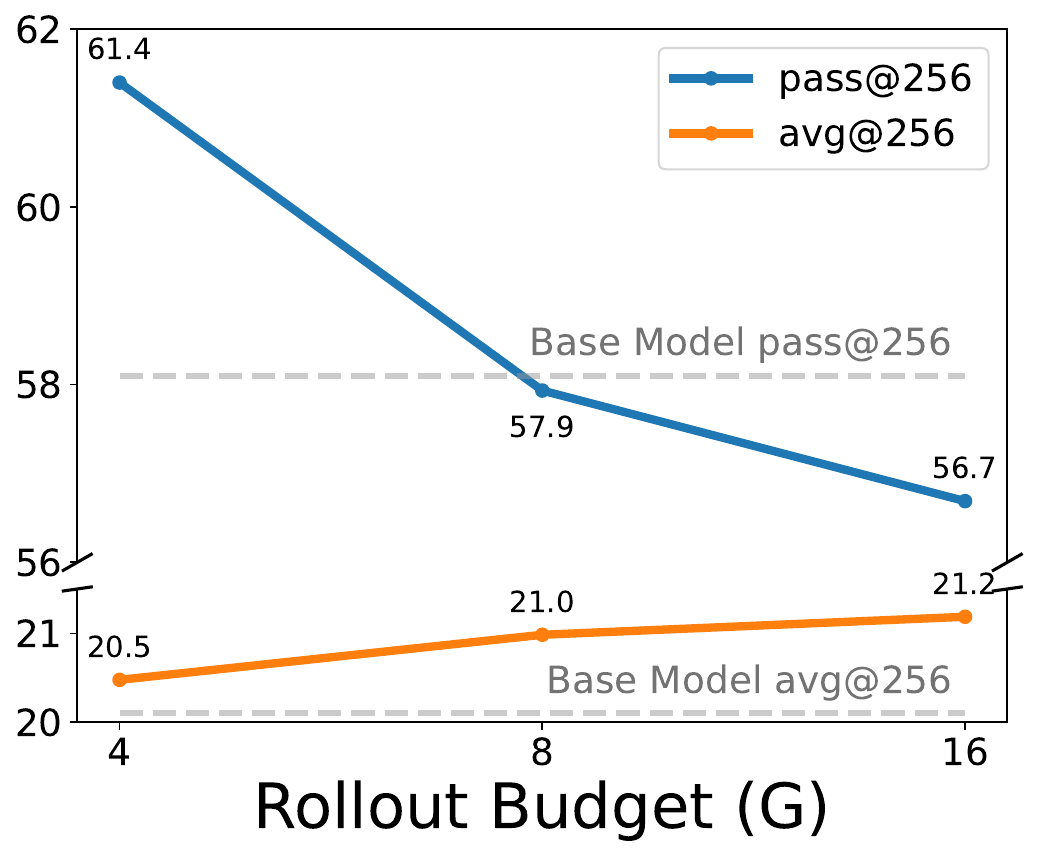}
        \caption{Models trained on \textbf{Easy} dataset}
        \label{fig:easy_res}
    \end{subfigure}
    \hfill
    \begin{subfigure}[b]{0.32\textwidth}
        \centering
        \includegraphics[width=\linewidth]{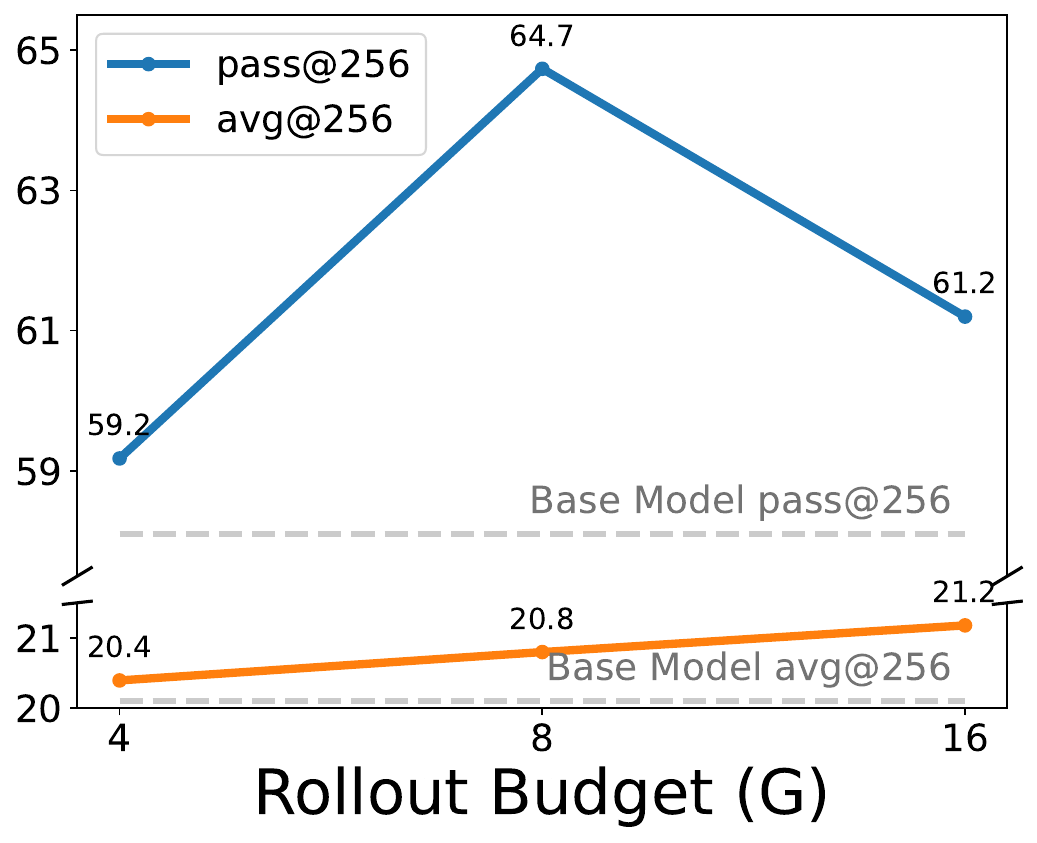} 
        \caption{Models trained on \textbf{Medium} dataset}
        \label{fig:med_res}
    \end{subfigure}
    \hfill
    \begin{subfigure}[b]{0.32\textwidth}
        \centering
        \includegraphics[width=\linewidth]{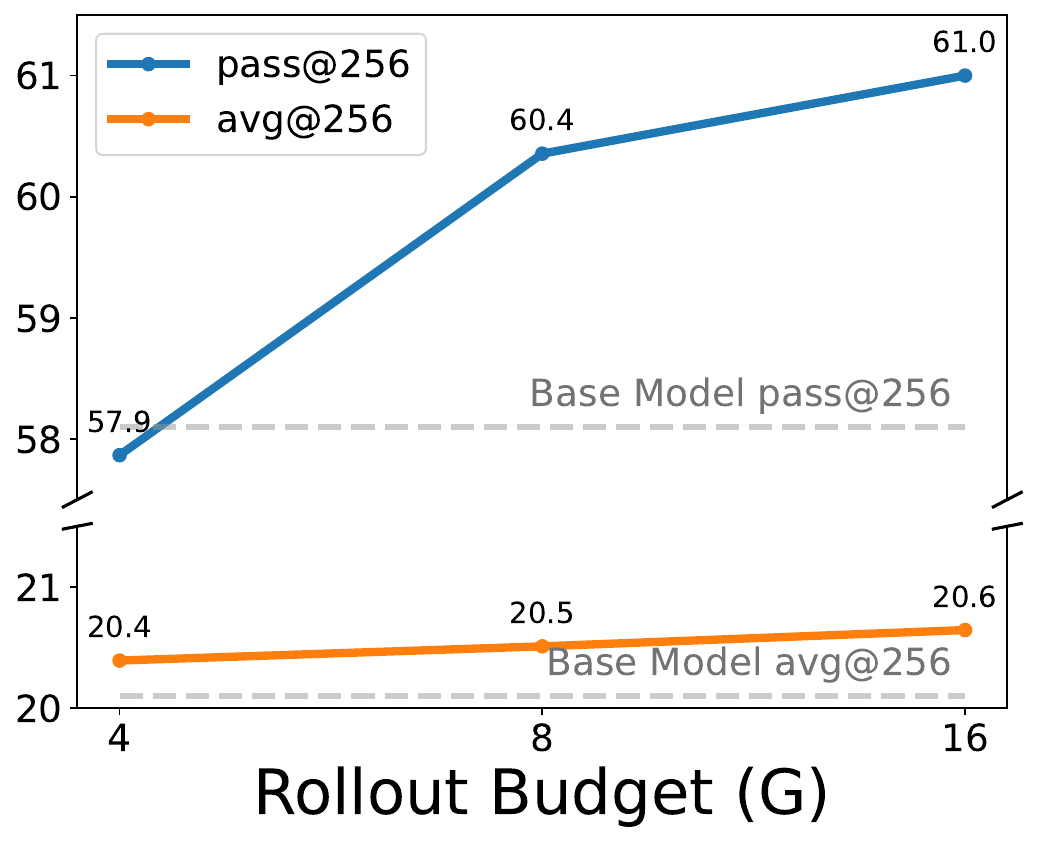}
        \caption{Models trained on \textbf{Hard} dataset}
        \label{fig:hard_res}
    \end{subfigure}
    
    \caption{avg@256 and pass@256 performance across different rollout budgets for models trained on different difficulty subsets.}
    \label{fig:adaptive_results}
\end{figure*}

In summary, our main contributions are as follows:
\begin{itemize}
    \item We reveal key structural principles for expanding reasoning coverage: difficulty-adaptive rollout is a key factor, and tree-structured rollouts inherently outperform parallel sampling, with sentence-entropy forking further amplifying this structural advantage.
    \item We propose DATPO, a novel RLVR framework that integrates these structural insights with a diversity-augmented advantage to maximize coverage-expanding benefits. Experiments on mathematical reasoning benchmarks validate our approach, showing notable improvements in pass@$k$.
\end{itemize}




\section{Analyzing the Structural Design of Train-time Rollouts}
\label{sec:3_analysis}
Despite extensive research on expanding reasoning coverage, the structural design of \textit{train-time} rollouts remains underexplored. This section investigates optimal rollout structures to maximize not only single-sample accuracy (avg@$k$) but also broader reasoning coverage (pass@$k$). Our analysis is driven by three primary research questions: (1) \textbf{Difficulty-Adaptive Allocation:} How should compute budgets be allocated across problems of varying difficulty to expand reasoning coverage?; (2) \textbf{Topology of Search Space:} Which search space structure (tree-based or parallel rollout) more efficiently discovers correct reasoning trajectories?; and (3) \textbf{Optimal Forking Strategy:} Where should forking (branching) occur in tree-based rollouts to effectively promote diverse reasoning paths?

\subsection{Necessity of Difficulty-Adaptive Rollout}
\label{sec:adaptive_rollout}
Standard RLVR algorithms, notably GRPO \citep{shao2024deepseekmath} and its variants \citep{yu2025dapo, liu2025understanding}, typically generate a uniform number of rollouts regardless of problem difficulty. While existing difficulty-adaptive rollout strategies prioritize \textit{computational efficiency} \citep{liu2025attention, liao-etal-2025-enhancing, zhang2025improving, zheng2025act, li2025knapsack}, we challenge this perspective by demonstrating that adaptive allocation is not merely a resource-saving heuristic but a crucial factor in expanding reasoning coverage.

\subsubsection{Empirical Analysis}
\label{sec:difficulty_adaptive_analysis}
We first partitioned the training dataset into Easy, Medium, and Hard subsets based on the accuracy of the base model. Using GRPO \citep{shao2024deepseekmath}, we conduct a total of 9 training runs. For each difficulty subset, we vary the rollout budget by adjusting the GRPO group size, $G \in \{4, 8, 16\}$. The resulting 9 models are evaluated on 3 different benchmarks, reporting both avg@256 and pass@256 metrics (detailed in \autoref{app:experimental_details_adaptive}).

The aggregated results, as illustrated in \autoref{fig:adaptive_results}, reveal two key insights. First, \textbf{avg@256 shows consistent, yet marginal, improvements with increased rollout budgets ($G$) across all difficulties.} We attribute this to more stable exploitation; larger GRPO group sizes reduce variance in advantage estimation and yield more frequent meaningful learning signals, effectively solidifying the policy distribution around correct solutions.

Second, \textbf{the impact of increasing rollouts on pass@256 shifts from detrimental to beneficial depending on the difficulty of the training dataset.} For models trained on the \textit{Easy} dataset (\autoref{fig:easy_res}), larger budgets degrade pass@256. This occurs because the model over-exploits specific, easy templates, overfitting to familiar problems while failing entirely on harder ones. Since pass@256 requires only a single correct path, generating redundant successes for easy problems provides no coverage benefit. Consequently, despite marginal gains in avg@256, pass@256 decreases as the rollout budget increases. Conversely, for models trained on the \textit{Hard} dataset (\autoref{fig:hard_res}), where valid paths are scarce, larger budgets promote broader exploration without overfitting to narrow patterns, consistently improving pass@256. Between these opposing behaviors, models trained on the \textit{Medium} dataset (\autoref{fig:med_res}) exhibit a non-monotonic trend, peaking at $G=8$ by effectively balancing exploration and over-exploitation.

These findings suggest that difficulty-adaptive rollout is not merely a heuristic for compute efficiency, but a strategic necessity for balancing the exploitation-exploration trade-off and maximizing the model's reasoning coverage. Comprehensive theoretical analysis is provided in \autoref{app:adaptive_theory}.

\subsection{Tree vs. Parallel Rollout}
\label{sec:tree_vs_parallel}
In RLVR, the primary challenge in expanding reasoning coverage is the scarcity of positive learning signals on complex problems. These signals emerge only when the model discovers a correct reasoning path. To address this, we investigate the properties of train-time rollout structures by comparing the conventional \textit{parallel rollout} with a \textit{tree-structured rollout} strategy, aiming to determine which topology more efficiently discovers correct trajectories.

\begin{figure}[t]
    \centering
    \includegraphics[width=0.8\linewidth]{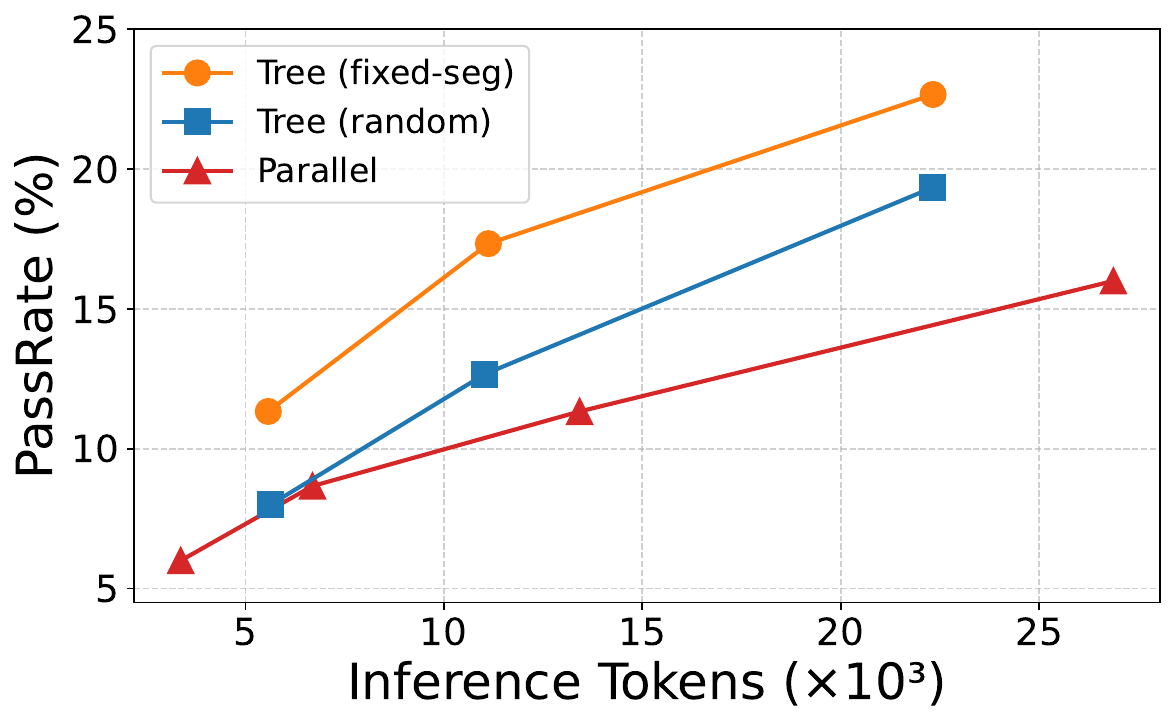}
    \caption{Comparison of PassRate against inference token consumption between parallel and two tree-structured sampling using different forking strategies.}
    \label{fig:tree_parallel}
\end{figure}

\subsubsection{Tree Rollout Algorithm}
\label{sec:tree_rollout}
We employ a generalized and simplified variant of the two-phase tree rollout strategy originally proposed by \citet{hou2025treerl}. In the first phase, the model generates $N$ independent base rollouts in parallel. Subsequently, in the second phase, we apply a forking point selection algorithm to identify $K$ forking points within each base trajectory. From each selected forking point, we generate $B$ additional branch rollouts, thereby expanding the exploration scope. Detailed algorithmic procedures are provided in \hyperref[app:tree_algorithm]{Appendix~\ref{app:tree_algorithm}}.

\subsubsection{Empirical Analysis}
\label{sec:tree_or_parallel_analysis}
To analyze cost-efficiency during inference, we compare standard parallel sampling against two tree-structured rollout variants by analyzing PassRate---the probability of obtaining at least one correct solution---as a function of token consumption during generation. These variants include \texttt{fixed-seg}, which selects the $K$ forking points at equidistant intervals, and \texttt{random}, which selects them uniformly at random (detailed in \hyperref[app:experimental_details_tree_parallel]{Appendix~\ref{app:experimental_details_tree_parallel}}).

As shown in \autoref{fig:tree_parallel}, both tree-structured variants achieve higher token efficiency than parallel sampling. Specifically, their performance curves demonstrate a steeper growth rate, yielding a higher PassRate for the same number of generated tokens. This enhanced efficiency stems from prefix sharing in tree structures, which avoids redundant regeneration of identical early segments and enables the exploration of alternative continuations within the same token budget \citep{tran2025exploiting, hou2025treerl}.

Furthermore, the choice of forking strategy also affects performance within tree-structured methods. While both \texttt{fixed-seg} and \texttt{random} outperform parallel sampling, \texttt{fixed-seg} consistently achieves a higher PassRate under comparable token budgets. This demonstrates that the effectiveness of tree-based exploration depends not only on its structure but also on the placement of branching points.

\subsection{Forking Point Selection in Tree Search}
\label{sec:optimal_forking}
\hyperref[sec:tree_vs_parallel]{Section~\ref{sec:tree_vs_parallel}} shows that the performance of tree-structured rollouts varies substantially across different forking strategies. In this section, we investigate which forking strategies most effectively identify correct solutions and promote semantically diverse reasoning paths.

\subsubsection{The Localization Phenomenon and Sentence-level Forking}
\label{sec:forking_method}
Previous studies primarily select top-$k$ high-entropy tokens as forking points \citep{hou2025treerl, zheng2025first, cao2026treeadv}, as they capture highly uncertain words that act as pivotal branching points \citep{wang2025beyond, cheng2025reasoning}. However, we identify a critical limitation in this approach: the \textit{localization} phenomenon (\autoref{fig:localization}). High-entropy tokens tend to densely cluster within narrow, highly uncertain segments of the reasoning trajectory. Consequently, the search budget is monopolized by repeated resampling within a localized cluster, which limits the structural reach of the search tree.

To mitigate localization, we propose a sentence-level forking strategy (\texttt{sent-entropy}), which expands the granularity of entropy estimation. In this approach, sentence entropy is calculated as the average entropy of its tokens, and the starting points of the top-$k$ high-entropy sentences are selected as forking points. This strategy naturally evades localization while still targeting the most uncertain regions. We compare this approach with the conventional token-level method (\texttt{tok-entropy}) and three baselines: \texttt{random}, fixed-segment selection (\texttt{fixed-seg}), and Attention-based Tree Branching (\texttt{ATB}) \citep{liu2025attention}, which branches at steps receiving the highest attention weights. Detailed algorithmic procedures are provided in \hyperref[app:forking_point_algorithm]{Appendix~\ref{app:forking_point_algorithm}}.

\subsubsection{Metrics for Measuring Diversity}
\label{sec:forking_metric}
While PassRate effectively captures task success, comprehensively evaluating these forking strategies also requires measuring whether the generated paths are semantically diverse. To directly quantify exploration diversity, we introduce an embedding-based metric, Sibling Diversity (SibDiv). We first partition the reasoning tree into contiguous text blocks bounded by forking points or the end of the trajectory. SibDiv then computes the average pairwise cosine distance (i.e., $1 - \text{cosine similarity}$) between the embeddings of sibling blocks originating from the same forking point. By aggregating these values, this metric effectively evaluates how well the forking points promote semantically diverse reasoning branches. Formal definitions are provided in \hyperref[app:diversity_metrics]{Appendix~\ref{app:diversity_metrics}}.

\begin{table}[t]
\centering
\renewcommand{\arraystretch}{1.0}
\begin{tabular}{lcc}
\toprule
\textbf{Method} & \textbf{PassRate} & \textbf{SibDiv} \\
\midrule
\texttt{random}        & 10.0 & 0.0796 \\
\texttt{fixed-seg}     & 13.3 & 0.0853 \\
\texttt{ATB}           & \underline{14.7} & 0.0874 \\
\texttt{tok-entropy}   & 12.0 & \textbf{0.1065} \\
\texttt{sent-entropy}  & \textbf{17.3} & \underline{0.1049} \\
\bottomrule
\end{tabular}
\caption{Comparison of \textbf{forking point selection strategies} when applied during inference. Best results are in \textbf{bold}, and second-best results are \underline{underlined}.}
\label{tab:forking_point_results}
\end{table}

\subsubsection{Empirical Analysis}
\label{sec:forking_analysis}
We evaluated each forking strategy by generating reasoning trees under a fixed inference budget, measuring PassRate and SibDiv (see \hyperref[app:experimental_details_forking_points]{Appendix \ref{app:experimental_details_forking_points}} for detailed experimental setups).

The results are summarized in \autoref{tab:forking_point_results}. First, \texttt{tok-entropy} achieves the highest SibDiv because performing additional sampling at the model's most uncertain points naturally generates diverse immediate sibling blocks. However, due to the \textit{localization} phenomenon, this high SibDiv is strictly confined to a narrow segment of the reasoning path. The search fails to expand into meaningful structural differences across the entire reasoning tree, resulting in a low PassRate. By simply expanding the granularity of the entropy estimation, \texttt{sent-entropy} shows the highest PassRate while incurring minimal loss in SibDiv. In contrast, baselines such as \texttt{fixed-seg} and \texttt{ATB} inherently avoid localization, yielding higher PassRates than \texttt{tok-entropy}, but exhibit lower SibDiv since they do not utilize uncertainty signals. Ultimately, \texttt{sent-entropy} emerges as the most effective strategy by successfully translating high semantic diversity into a high PassRate.

\section{Methodology}
\label{sec:methodology}
Building upon the empirical insights from \hyperref[sec:3_analysis]{Section~\ref{sec:3_analysis}}, we propose \textbf{DATPO} (\textbf{D}ifficulty-\textbf{A}daptive Sentence-entropy-guided \textbf{T}ree-structured \textbf{P}olicy \textbf{O}ptimization) to expand a model's intrinsic reasoning coverage (pass@$k$). An overview of the proposed framework is illustrated in \autoref{fig:overview}.

\subsection{Difficulty-Adaptive Tree Search at Train-time}
\label{sec:method_adaptive_tree}

Building on the tree algorithm proposed in \hyperref[sec:tree_rollout]{Section~\ref{sec:tree_rollout}}, we introduce a difficulty-adaptive tree search to dynamically allocate the search budget. The procedure operates in two phases. First, we generate $N$ independent base rollouts and estimate the empirical difficulty of the prompt through their average verifiable reward $V(\mathrm{root}) \in [0, 1]$.

Second, we scale the tree expansion proportionally to this difficulty, where a lower $V(\mathrm{root})$ indicates a more challenging problem, thus allocating more search budget. Given maximum budgets for forking points $K_{\max}$ and branch rollouts $B_{\max}$, the adaptive parameters are computed as:
\begin{align}
    \hat{K} &= \lceil K_{\max} (1 - V(\mathrm{root})) \rceil \\
    \hat{B} &= \lceil B_{\max} (1 - V(\mathrm{root})) \rceil
\end{align}

Using the \texttt{sent-entropy} (\hyperref[sec:forking_method]{Section~\ref{sec:forking_method}}), we identify $\hat{K}$ forking points within each base trajectory and generate $\hat{B}$ branches from each point.

This mechanism entirely bypasses expansion only when $V(\mathrm{root}) = 1$. For harder problems, it expands the search space up to $N(1 + \hat{K}\hat{B})$ leaves. By scaling expansion based on difficulty, this strategy concentrates exploration on unsolved problems to maximize reasoning coverage.

\subsection{Block-level Diversity-augmented Advantage Estimation}
\label{sec:method_advantage}

To optimize the policy, we partition trajectories into contiguous \textit{blocks} bounded by forking points or the end of the trajectory. Advantages are computed and assigned at this block level.

Since our tree search generates multiple branches from intermediate states, we reliably estimate state values using Monte Carlo (MC) returns \citep{kazemnejad2024vineppo}. For a state $s$ at any forking point, $\hat{V}_{MC}(s)$ is the average verifiable reward of all its descending terminal blocks. For a terminal state $s_T$ without further rollouts, $\hat{V}_{MC}(s_T) = 0$.

For a block $b$ spanning from $s_{\text{start}}^{(b)}$ to $s_{\text{end}}^{(b)}$, the base advantage is formulated as:
\begin{equation}
    \hat{A}_{\text{base}}(b) = r(b) + \hat{V}_{MC}(s_{\text{end}}^{(b)}) - \hat{V}_{MC}(s_{\text{start}}^{(b)})
\end{equation}
where the verifiable reward $r(b) \in \{0, 1\}$ is assigned exclusively to terminal blocks; otherwise, it is $0$. All tokens within $b$ share this identical advantage. This block-level advantage assigns precise credit to intermediate steps, effectively facilitating implicit process supervision.

\begin{figure}[t]
    \centering
    \includegraphics[width=\linewidth]{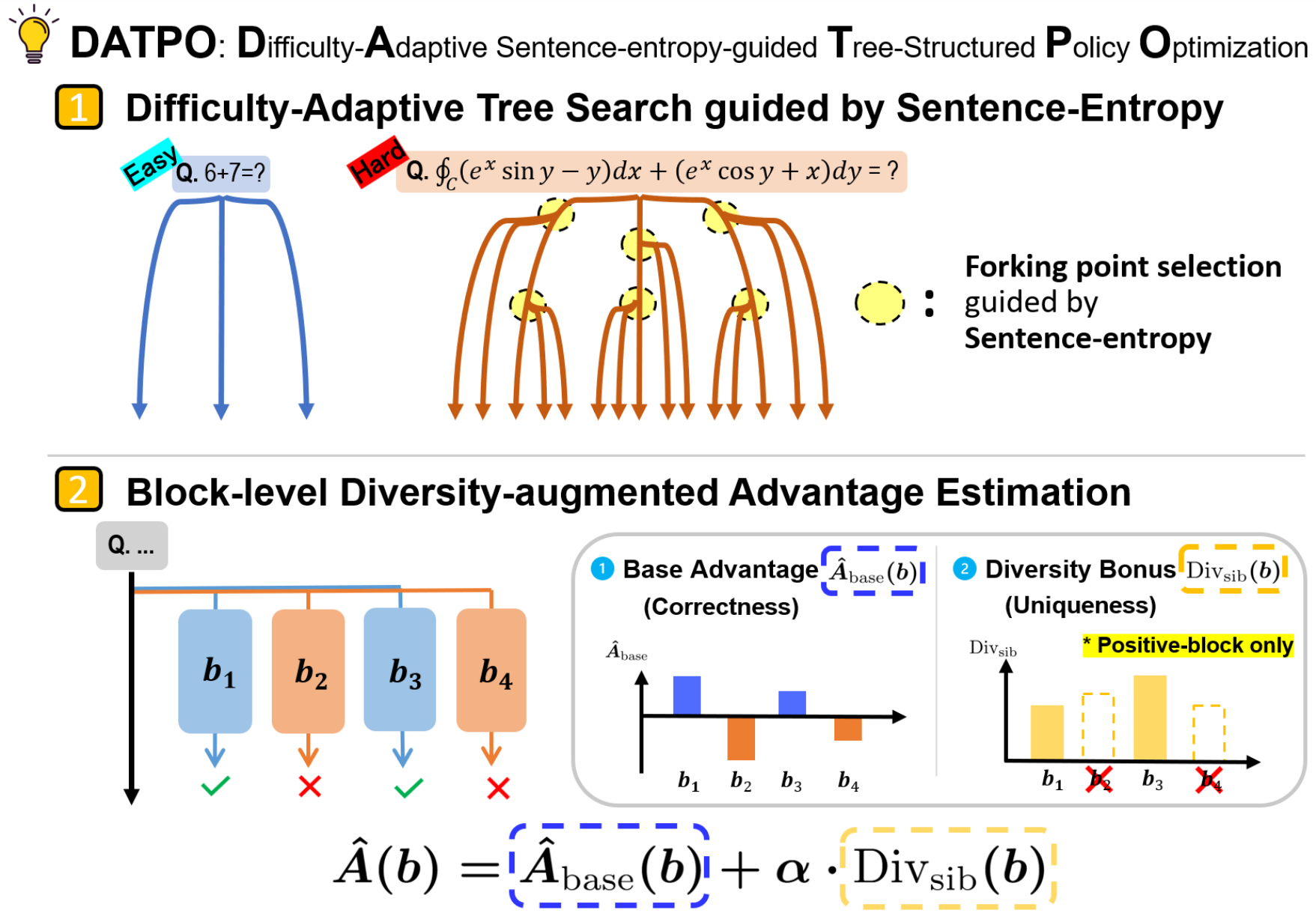}
    \caption{Overview of the proposed DATPO framework.}
    \label{fig:overview}
\end{figure}

To maximize the coverage-expanding benefits, we augment the base advantage with a \textit{sibling-diversity} term $\text{Div}_{\text{sib}}(b)$, inspired by the SibDiv metric. Specifically, sibling blocks refer to the child blocks generated from a shared forking point. $\text{Div}_{\text{sib}}(b)$ is defined as the average cosine distance between the embedding of block $b$ and its sibling blocks, effectively encouraging the model to discover distinct reasoning paths. Crucially, we apply this bonus exclusively to blocks with positive base advantages to avoid incentivizing the exploration of incorrect paths. The augmented advantage is:
\begin{equation}
    \hat{A}(b) = \hat{A}_{\text{base}}(b) + \mathbb{I}(\hat{A}_{\text{base}}(b) > 0) \cdot \alpha \cdot \text{Div}_{\text{sib}}(b)
\end{equation}
where $\mathbb{I}(\cdot)$ is the indicator function, and the coefficient $\alpha$ is linearly annealed during training to gradually decay the exploration incentive, allowing the model to refine its learned reasoning paths.

Finally, we optimize the policy directly across the generated tree topology. Given a set of contiguous blocks $\mathcal{B}$ generated for a prompt $q$, the DATPO objective is formulated as:
\begin{equation}
\begin{split}
\mathcal{J}_{\mathrm{DATPO}}(\theta) &= \mathbb{E}_{q \sim \mathcal{Q}, \mathcal{B} \sim \pi_{\theta_{\mathrm{old}}}} \Bigg[ \frac{1}{\sum_{b \in \mathcal{B}} |b|} \sum_{b \in \mathcal{B}} \sum_{t=1}^{|b|} \\
&\quad \min \Big( \rho_{b,t}(\theta) \hat{A}(b), \mathrm{clip}\big(\rho_{b,t}(\theta), \\
&\qquad\quad 1-\epsilon, 1+\epsilon\big) \hat{A}(b) \Big) \Bigg],
\end{split}
\end{equation}
where $|b|$ is the sequence length of block $b$, and $\hat{A}(b)$ is the block-level augmented advantage. Comprehensive algorithmic details and training procedures are provided in \autoref{app:datpo_details}.

\begin{table*}[t!]
\centering
\small
\setlength{\tabcolsep}{4pt}
\renewcommand{\arraystretch}{1.15}

\resizebox{\textwidth}{!}{
\begin{tabular}{lcccccccccccc}
\toprule

\multirow{2}{*}{\textbf{Method}}
& \multicolumn{2}{c}{\textbf{MATH500}}
& \multicolumn{2}{c}{\textbf{AIME26}}
& \multicolumn{2}{c}{\textbf{AIME25}}
& \multicolumn{2}{c}{\textbf{AIME24}}
& \multicolumn{2}{c}{\textbf{AMC23}}
& \multicolumn{2}{c}{\textbf{Average}} \\

\cmidrule(lr){2-3} \cmidrule(lr){4-5} \cmidrule(lr){6-7} \cmidrule(lr){8-9} \cmidrule(lr){10-11} \cmidrule(lr){12-13}

& avg@8 & pass@8
& avg@64 & pass@64
& avg@64 & pass@64
& avg@64 & pass@64
& avg@64 & pass@64
& avg@$k$ & pass@$k$ \\

\midrule

\rowcolor{gray!15}
\multicolumn{13}{c}{\textbf{Qwen2.5-3B-Base}} \\
Base          & 26.3 & 64.7 & 0.4 & 8.9 & 0.2 & 12.2 & 1.0 & 22.2 & 10.0 & 78.3 & 7.6 & 37.3 \\
GRPO          & 61.8 & 79.8 & 1.7 & 21.1 & 0.7 & 24.4 & 3.6 & 30.0 & 35.9 & \underline{85.8} & 20.7 & 48.2 \\
Dr.GRPO       & 61.3 & 78.7 & \textbf{3.0} & 23.3 & 1.4 & \underline{27.8} & \underline{5.1} & 27.8 & 37.9 & 83.3 & \underline{21.7} & 48.2 \\
TreeRL        & 61.7 & 80.7 & \underline{2.4} & 22.2 & \textbf{1.7} & 20.0 & 4.2 & 24.4 & \underline{38.7} & 83.3 & \underline{21.7} & 46.1 \\
AttnRL        & \underline{62.0} & \textbf{82.1} & 1.9 & \underline{32.2} & 1.5 & 25.6 & \textbf{5.2} & \underline{34.4} & 35.7 & \textbf{90.8} & 21.3 & \underline{53.0} \\
DATPO (Ours)  & \textbf{63.5} & \underline{81.7} & \underline{2.4} & \textbf{33.3} & \underline{1.6} & \textbf{33.3} & 4.8 & \textbf{35.6} & \textbf{39.8} & \textbf{90.8} & \textbf{22.4} & \textbf{54.9} \\

\midrule
\rowcolor{gray!15}
\multicolumn{13}{c}{\textbf{Qwen3-4B-Base}} \\
Base          & 39.9 & 79.4 & 1.7 & 21.1 & 1.1 & 32.2 & 2.8 & 38.9 & 17.2 & 84.2 & 12.5 & 51.2 \\
GRPO          & \underline{75.5} & 87.4 & 7.1 & \underline{32.2} & \underline{10.3} & \underline{38.9} & 9.6 & 41.1 & 47.9 & \textbf{91.7} & 30.1 & \underline{58.3} \\
Dr.GRPO       & 75.2 & 86.2 & \underline{7.8} & 25.6 & 7.7 & 35.6 & 9.7 & 40.0 & 46.6 & \underline{90.8} & 29.4 & 55.6 \\
TreeRL        & 75.3 & 86.2 & \textbf{7.9} & 28.9 & 9.4 & 36.7 & 9.4 & 41.1 & \underline{49.4} & \underline{90.8} & 30.3 & 56.7 \\
AttnRL        & 74.7 & \underline{87.5} & 7.1 & 28.9 & 8.2 & 36.7 & \underline{11.0} & \underline{42.2} & \textbf{52.5} & \textbf{91.7} & \underline{30.7} & 57.4 \\
DATPO (Ours)  & \textbf{76.4} & \textbf{87.9} & 6.5 & \textbf{33.3} & \textbf{10.9} & \textbf{42.2} & \textbf{14.0} & \textbf{46.7} & 48.4 & \textbf{91.7} & \textbf{31.3} & \textbf{60.4} \\

\bottomrule
\end{tabular}
}
\caption{Evaluation results on mathematical reasoning benchmarks. We report avg@$k$ and pass@$k$ for each dataset.}
\label{tab:main_results}
\end{table*}

\section{Experiments}
\label{sec:experiments}
\subsection{Experimental Setup}
\label{sec:exp_setup}
\paragraph{Models and Datasets}
\label{par:models_datasets}
We use Qwen2.5-3B-Base \citep{qwen2.5} and Qwen3-4B-Base \citep{qwen3} as base models. For training, we use the MATH dataset \citep{hendrycks2021measuring}.

\paragraph{Evaluation}
\label{par:evaluation_metrics}
We evaluate the trained models on mathematical reasoning benchmarks, specifically MATH500 \citep{hendrycks2021measuring}, AIME26, AIME25, AIME24, and AMC23. Using a temperature of 1.0, we report avg@$k$ as the primary evaluation metric across all benchmarks. Additionally, we report pass@$k$ to assess the reasoning coverage of the trained models. For robustness, the pass@$k$ results are computed by averaging over three independent evaluation runs.

\begin{figure}[t]
    \centering
    \includegraphics[width=0.85\linewidth]{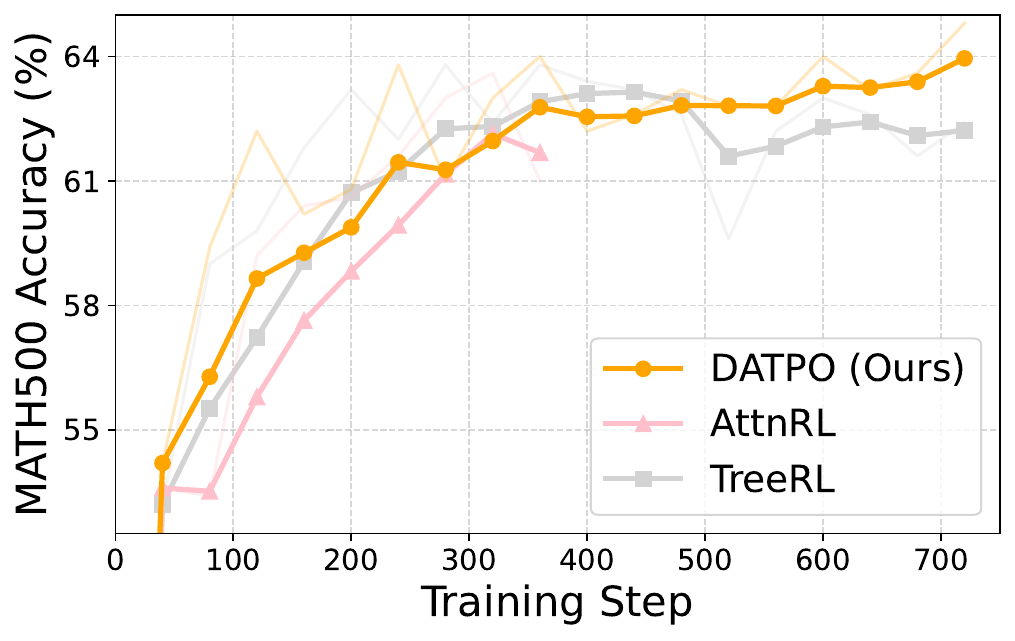}
    \caption{Learning curves of MATH500 accuracy over training steps for tree-based methods. (Smoothed)}
    \label{fig:learning_curve}
\end{figure}

\paragraph{Baselines}
\label{par:baselines}
We compare DATPO against the \textbf{Base} model and several advanced RLVR baselines. These include \textbf{GRPO} \citep{shao2024deepseekmath} (enhanced with clip-higher and token-level loss \citep{yu2025dapo}) and its variant, \textbf{Dr.GRPO} \citep{liu2025understanding}. Furthermore, we evaluate two tree-based methods: \textbf{TreeRL} \citep{hou2025treerl}, which utilizes top-$k$ high-entropy forking and combined local-global advantages, and \textbf{AttnRL} \citep{liu2025attention}, which leverages attention-based forking, adaptive sampling, and a one-step off-policy for enhanced efficiency. To ensure a fair comparison, we maintain a comparable total number of generated tokens per problem across all methods while adopting the tree-expansion hyperparameters reported in the original TreeRL and AttnRL papers. Comprehensive details for experiments are provided in \autoref{app:experiments}.

\begin{figure}[t]
    \centering
    \includegraphics[width=\linewidth]{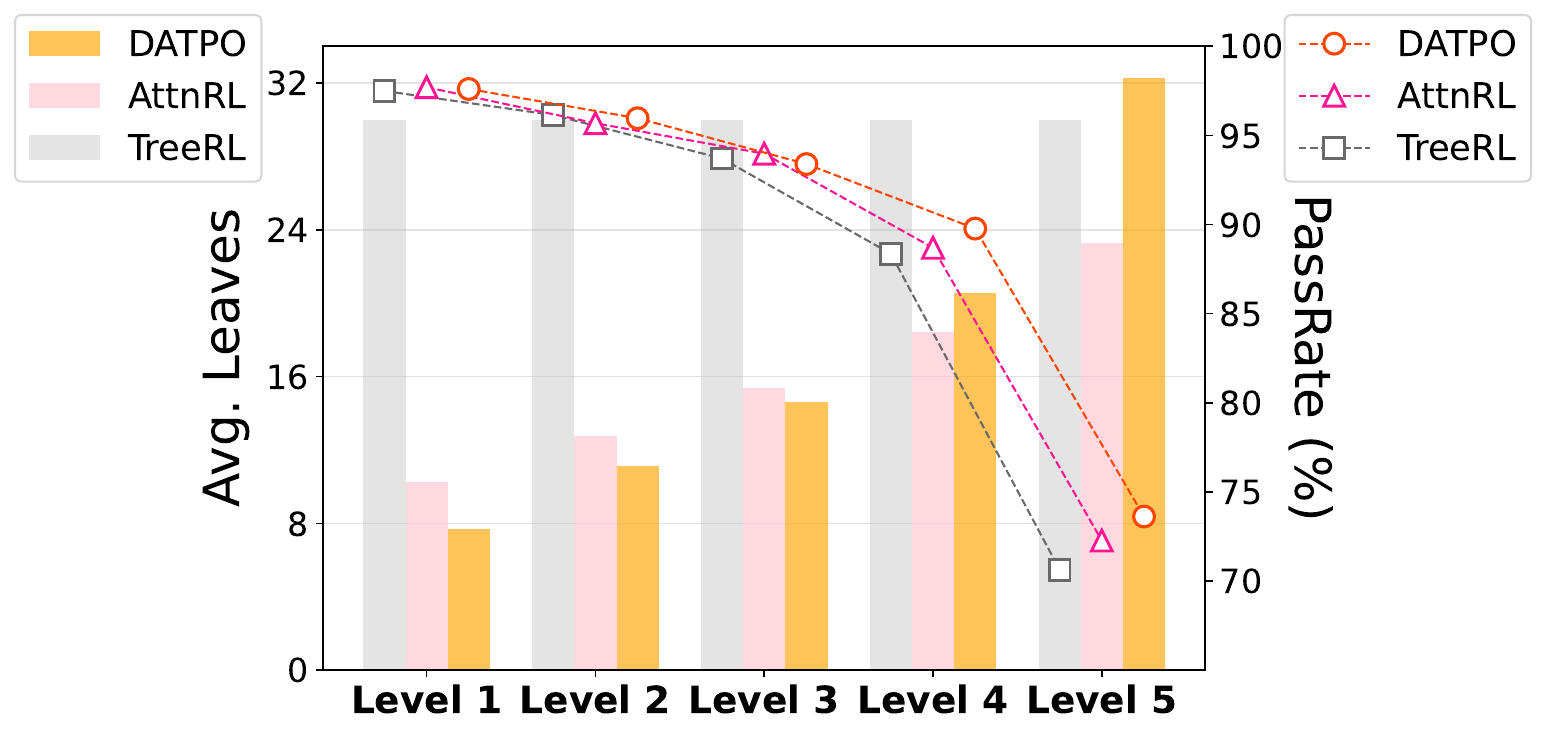}
    \caption{Comparison of average leaf count generated at train-time and PassRate across difficulty levels for tree-based methods.}
    \label{fig:leaf_count}
\end{figure}

\subsection{Main Results}
\label{sec:main_results}
\paragraph{Mathematical Reasoning Performance}
\label{par:math_reasoning_performance}
As reported in \autoref{tab:main_results}, DATPO achieves the best aggregate avg@k and pass@k across the evaluated benchmarks. Notably, while the improvements in single-sample accuracy (avg@$k$) are marginal compared to the strongest baseline, AttnRL (e.g., +1.1 and +0.6 on Qwen2.5-3B-Base and Qwen3-4B-Base, respectively), DATPO achieves substantial gains in pass@$k$, outperforming AttnRL by +1.9 and +3.0, respectively. This indicates that DATPO's difficulty-adaptive rollout and sibling-diversity term successfully expand the model's intrinsic reasoning coverage.

\paragraph{Analysis of Training Dynamics}
\label{par:training_dynamics_analysis}
We first examine the MATH500 accuracy over training steps (\autoref{fig:learning_curve}). While all three tree-based methods exhibit similar accuracy gains during the initial training steps, TreeRL and AttnRL fail to sustain this upward trend, eventually plateauing or even suffering performance degradation. In contrast, DATPO shows a consistent upward trend throughout the training. This stability can be attributed to the annealed sibling-diversity term, which injects semantic diversity early in training to prevent premature convergence.

Furthermore, we analyze the search behavior during training by comparing the average leaf count and PassRate across the difficulty levels of the MATH dataset (\autoref{fig:leaf_count}). Note that these difficulty labels are used strictly for this analysis and were not provided to the model during training. Unlike TreeRL, which maintains a fixed leaf count across all difficulties, DATPO and AttnRL employ difficulty-adaptive rollouts. However, DATPO allocates fewer rollouts to easy problems and significantly more rollouts to hard problems compared to AttnRL. The latter's adaptive strategy relies heavily on an attention-based filtering mechanism that simply discards problems with below-average attention scores to maximize computational efficiency. Because this approach acts as a rigid cut-off, it fails to concentrate computational resources on the most challenging problems. Consequently, DATPO's highly adaptive resource allocation enables it to achieve a higher PassRate on harder problems (Levels 4 and 5) compared to other tree-based methods.

\begin{figure}[t]
    \centering
    \includegraphics[width=0.85\linewidth]{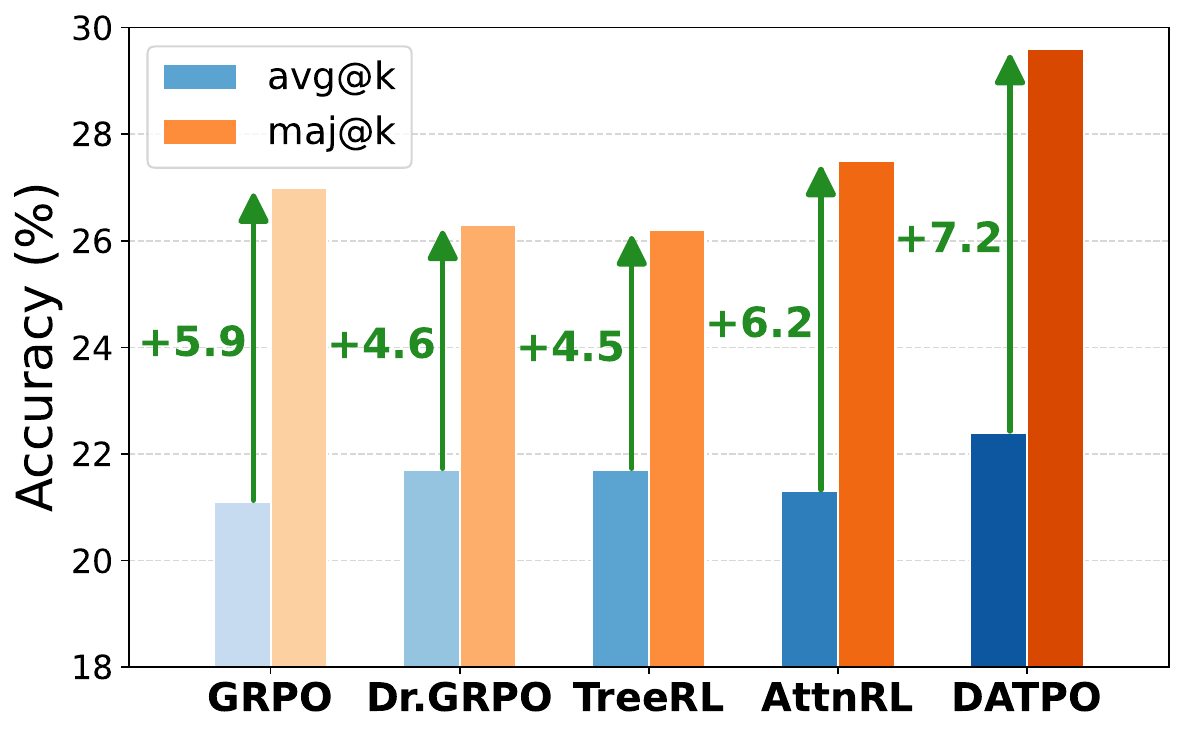}
    \caption{Test-time scaling performance using majority voting (maj@$k$). The green arrows indicate the performance improvement of maj@$k$ over avg@$k$.}
    \label{fig:test_time_scaling}
\end{figure}

\subsection{Test-time Scaling Performance}
\label{sec:test_time_scaling_performance}
In this section, we investigate whether the substantial gains in pass@$k$ directly translate into improved test-time scaling performance. To this end, we apply majority voting (maj@$k$) \citep{wang2022self}---one of the simplest and most widely adopted test-time scaling strategies---to compare the methods applied to Qwen2.5-3B-Base. We evaluate the effectiveness of these methods across five mathematical reasoning benchmarks, applying $k=8$ for MATH500 and $k=64$ for the remaining datasets. For robustness, all maj@$k$ results are computed as the average over three independent evaluation runs.

As illustrated by the average performance across five benchmarks in \autoref{fig:test_time_scaling}, DATPO, which achieved the highest pass@$k$ during training, also shows the largest maj@$k$ gain, improving by +7.2 over its avg@$k$. This demonstrates that expanding a model's reasoning coverage during training directly enhances test-time scaling performance.

\subsection{Ablation Studies}
\label{sec:ablation}

\paragraph{Effect of Forking Strategies}
\label{par:ablation_forking}
Building upon the analysis during inference presented in \hyperref[sec:optimal_forking]{Section~\ref{sec:optimal_forking}}, we further investigate the impact of different forking strategies during training. We compare DATPO's default \texttt{sent-entropy} strategy against four alternatives: \texttt{random}, \texttt{fixed-seg}, \texttt{ATB}, and \texttt{tok-entropy}, evaluating their performance on the MATH500 benchmark. As reported in \autoref{tab:ablation_forking_strategies}, the \texttt{sent-entropy} yields the highest performance. This confirms our hypothesis: the \texttt{sent-entropy} effectively guides exploration using the model's uncertainty, while avoiding \textit{localization} that limits \texttt{tok-entropy}.

\begin{table}[t]
\centering
\footnotesize
\setlength{\tabcolsep}{8pt}
\renewcommand{\arraystretch}{1.2}
\begin{tabular}{lcc}
\toprule
\multirow{2}{*}{\textbf{Method}} & \multicolumn{2}{c}{\textbf{MATH500}} \\
\cmidrule(lr){2-3}
 & \textbf{avg@8} & \textbf{pass@8} \\
\midrule
\texttt{random}       & 61.5 & \underline{81.6} \\
\texttt{fixed-seg}    & 61.1 & 80.6 \\
\texttt{ATB}          & \underline{62.1} & 80.7 \\
\texttt{tok-entropy}  & 61.3 & 80.3 \\
\texttt{sent-entropy} & \textbf{63.5}  & \textbf{81.7} \\
\bottomrule
\end{tabular}
\caption{Ablation study on different forking strategies.}
\label{tab:ablation_forking_strategies}
\end{table}

\paragraph{Effect of Sibling-Diversity}
\label{par:ablation_alpha}
To evaluate the effect of the sibling-diversity term, we conduct an ablation study by varying the coefficient $\alpha$, and evaluate the performance on the MATH500 benchmark, as summarized in \autoref{tab:ablation_alpha}.

The results show that the $0.2 \rightarrow 0$ schedule is the most effective, surpassing both the baseline without diversity ($0 \rightarrow 0$) and a higher initial coefficient ($0.4 \rightarrow 0$). Furthermore, applying the diversity bonus to all blocks ($0.2 \rightarrow 0$, \textit{all}) degrades performance compared to the baseline without diversity, highlighting the need to reward diversity exclusively on valid paths. Annealing is also critical: a constant $\alpha$ ($0.2 \rightarrow 0.2$) severely degrades avg@8 despite slight pass@8 gains. Conversely, annealing $\alpha$ to a negative value ($0.2 \rightarrow -0.2$) improves avg@8 but noticeably harms pass@8, confirming that penalizing diversity in the later stages of training restricts the model's reasoning coverage.

\section{Related Work}
\paragraph{Reinforcement Learning for LLM Reasoning}
RLVR has become the standard for post-training of reasoning LLMs. A representative method is Group Relative Policy Optimization (GRPO) \citep{shao2024deepseekmath}, which replaces the costly critic model of PPO \citep{schulman2017proximal} with efficient group-based advantage estimation. Recent studies build on this framework to enhance stability and resolve optimization issues: DAPO \citep{yu2025dapo} prevents mode collapse through decoupled clipping and dynamic sampling, while Dr.GRPO \citep{liu2025understanding} corrects inherent structural biases in the advantage computation.

\begin{table}[t]
\centering
\footnotesize
\setlength{\tabcolsep}{8pt}
\renewcommand{\arraystretch}{1.2}
\begin{tabular}{lcc}
\toprule
\multirow{2}{*}{\textbf{$\alpha$}} & \multicolumn{2}{c}{\textbf{MATH500}} \\
\cmidrule(lr){2-3}
 & \textbf{avg@8} & \textbf{pass@8} \\
\midrule
0 $\rightarrow$ 0 & 62.1 & 81.4 \\
\textbf{0.2 $\rightarrow$ 0} & \textbf{63.5} & \underline{81.7} \\
0.4 $\rightarrow$ 0 & 61.9 & 81.3 \\
0.2 $\rightarrow$ 0, \textit{all} & 61.6 & 81.3 \\
0.2 $\rightarrow$ 0.2 & 61.5 & \textbf{82.0} \\
0.2 $\rightarrow$ -0.2 & \underline{63.4} & 81.0 \\
\bottomrule
\end{tabular}
\caption{Ablation study on the diversity coefficient $\alpha$. The right arrow ($\rightarrow$) indicates the linear annealing schedule during training. The \textit{all} applies the diversity term to all blocks, rather than exclusively to those with positive base advantages.}
\label{tab:ablation_alpha}
\end{table}

\paragraph{Exploration Strategies in RLVR}
RLVR often fails to expand a model's intrinsic reasoning capacity (pass@$k$) beyond its base capabilities \citep{yue2025does, dang2025assessing, wu2025invisible}. To overcome this exploration bottleneck, recent studies introduce explicit mechanisms to diversify search. PKPO \citep{walder2025pass} directly optimizes pass@$k$ to solve harder instances, while R1-zero-Div \citep{yao2025diversity} and FOR \citep{yuflow} use diversity-aware objectives and divergent reasoning flows to prevent trajectory collapse.

\paragraph{Tree-based Search in Reinforcement Learning}
Proven in RL milestones like AlphaGo \citep{silver2016mastering}, tree-based search is now adapted to LLMs to enable systematic exploration. TreeRL \citep{hou2025treerl} pioneers this with an on-policy framework using entropy-guided branching. To address computational bottlenecks, AttnRL \citep{liu2025attention} improves efficiency by leveraging difficulty-aware adaptive sampling within a one-step off-policy pipeline. While these methods use trees primarily for credit assignment or computational efficiency, we leverage them to expand the model's reasoning coverage.

\section{Conclusion}
In this work, we investigate the structural design of train-time rollouts in RLVR to expand the intrinsic reasoning coverage. Our analysis reveals that expanding reasoning coverage can benefit from moving beyond uniform parallel sampling toward difficulty-adaptive resource allocation and the structural advantages of sentence-entropy-guided tree search. Integrating these principles, we propose DATPO, which explicitly encourages semantic exploration through a sibling-diversity term. Experiments on mathematical benchmarks validate our approach, showing significant improvements in pass@$k$.

\section*{Limitations}
\label{sec:limitation}
While DATPO significantly expands reasoning coverage, calculating the sibling-diversity term requires additional forward passes through an external embedding model, which introduces computational overhead.

Additionally, while our block-level formulation effectively assigns process-level credit across the tree topology, it relies on small sample sizes. Deriving both the empirical difficulty and Monte Carlo state values from a limited number of rollouts (e.g., $N=4, B=4$) can inject noise into the estimations, occasionally yielding high variance during policy updates.

Finally, resource limitations restricted our empirical validation to the 3B-4B parameter regime, leaving DATPO's scalability to larger models ($\ge 7$B) unverified. Furthermore, as our evaluation primarily focuses on mathematical reasoning, the framework's direct effectiveness in other rigorous domains, such as complex logical reasoning or code generation, requires further investigation.

\section*{Acknowledgments}
This work was supported by the Institute of Information \& Communications Technology Planning \& Evaluation (IITP) grants funded by the Korea government (MSIT) (No. RS-2019-II191906, Artificial Intelligence Graduate School Program (POSTECH); IITP-2026-RS-2026-25616370, AI Star Fellowship Support Program) and by the National Research Foundation of Korea (NRF) grant funded by the Korea government (MSIT) (No. RS-2024-00335873).

\bibliography{custom}

\clearpage

\setcounter{theorem}{0}
\renewcommand{\thetheorem}{\thesection.\arabic{theorem}}
\appendix

\section{Theoretical Analysis of Difficulty-Adaptive Rollout}
\label{app:adaptive_theory}

We provide a theoretical analysis of the empirical findings in \hyperref[sec:adaptive_rollout]{Section~\ref{sec:adaptive_rollout}}. In particular, we explain why increasing the rollout budget $G$ consistently improves the expected avg@$k$, while the behavior of pass@$k$ can vary across difficulty subsets.

\paragraph{Notation.}
For a prompt $x$ and a policy $\pi_\theta$, let
\[
p_x(\theta) := \Pr_{y \sim \pi_\theta(\cdot \mid x)}[r(y)=1]
\]
denote the single-sample correctness probability under a binary verifiable reward
$r(y)\in\{0,1\}$.

In empirical evaluations, metrics are computed over a finite set of $k$ independent samples $\mathcal{Y} = (y^{(1)}, \dots, y^{(k)}) \stackrel{\mathrm{i.i.d.}}{\sim} \pi_\theta(\cdot \mid x)$. We define the empirical estimators as:
\[
\begin{aligned}
\widehat{\text{avg@k}}_x(\theta) &:= \frac{1}{k}\sum_{j=1}^k r\!\left(y^{(j)}\right), \\
\widehat{\text{pass@k}}_x(\theta) &:= \mathbb{I}\!\left(\sum_{j=1}^k r\!\left(y^{(j)}\right) \ge 1\right).
\end{aligned}
\]

However, to rigorously analyze the optimization dynamics, we focus on their \textit{expected values} with respect to the policy. Throughout this analysis, we omit the hat notation and define $\text{avg@k}_x(\theta)$ and $\text{pass@k}_x(\theta)$ strictly as these theoretical expectations:
\[
\begin{aligned}
\text{avg@k}_x(\theta)
&:=
\mathbb{E}_{\mathcal{Y} \sim \pi_\theta}
\left[
\widehat{\text{avg@k}}_x(\theta)
\right] \\
&= p_x(\theta)
\end{aligned}
\]
and
\[
\begin{aligned}
\text{pass@k}_x(\theta)
&:=
\mathbb{E}_{\mathcal{Y} \sim \pi_\theta}
\left[
\widehat{\text{pass@k}}_x(\theta)
\right] \\
&= 1-\bigl(1-p_x(\theta)\bigr)^k.
\end{aligned}
\]
By formulating these metrics as expectations, the theoretical avg@$k$ rigorously simplifies to the single-sample success probability $p_x(\theta)$.

For the analysis below, given the current policy parameters $\theta$, let
\[
\theta_G^{+} := \theta + \eta \hat g_G(x)
\]
denote one GRPO update with rollout group size $G$, learning rate $\eta$, and stochastic gradient estimator $\hat g_G(x)$. We consider only the reward-advantage component of the GRPO update, excluding auxiliary regularization terms.

\begin{theorem}[\textbf{One-step expected avg@$k$ improvement with rollout budget}]
\label{thm:avgk_monotone}
Fix a prompt $x$ and a policy $\pi_\theta$. Let $p := p_x(\theta) \in (0,1)$, and let $S_G \sim \mathrm{Binomial}(G, p)$ denote the number of correct responses in a rollout group size $G \ge 2$.

Assume $p_x(\cdot)$ is twice continuously differentiable with a bounded Hessian, and the gradient estimator satisfies $\mathbb{E}[\|\hat g_G(x)\|^2] < \infty$. Further assume:
\begin{enumerate}
    \renewcommand{\labelenumi}{\textbf{A\arabic{enumi}.}}
    \item \label{ass:zero_update} The update vanishes if all group rewards are identical: $\hat g_G(x) = 0$ almost surely if $S_G \in \{0, G\}$.
    \item \label{ass:constant_cx} The expected update direction on mixed-reward batches is a positive constant $c_x > 0$ independent of $G$:
    \[ 
    \begin{aligned}
        &\mathbb{E}\!\left[ \langle \nabla_\theta p_x(\theta), \hat g_G(x) \rangle \mid 1 \le S_G \le G-1 \right] \\
        &\quad = c_x . 
    \end{aligned}
    \]
\end{enumerate}

Then, for sufficiently small $\eta$, the expected correctness probability satisfies:
\[
\mathbb{E}[p_x(\theta_G^{+})] = p_x(\theta) + \eta c_x \Psi_G(p) + O_G(\eta^2),
\]
where $\Psi_G(p) := 1 - p^G - (1-p)^G$. 

Consequently, for any finite set of rollout budgets $\mathcal{G}$, there exists $\eta_0 > 0$ such that for all $\eta \in (0, \eta_0)$ and any $G_1, G_2 \in \mathcal{G}$ with $G_2 > G_1$:
\[
\mathbb{E}[p_x(\theta_{G_2}^{+})] > \mathbb{E}[p_x(\theta_{G_1}^{+})].
\]
Furthermore, the first-order gain coefficient $\Psi_G(p)$ is strictly increasing in $G$ with diminishing returns.
\end{theorem}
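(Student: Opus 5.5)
The plan is a second-order Taylor expansion of $p_x$ around $\theta$ along the stochastic update, followed by conditioning on $S_G$. Write $\theta_G^{+}=\theta+\eta\hat g_G(x)$. Because $p_x$ is $C^2$ with Hessian bounded by some $H$, Taylor's theorem with remainder gives
\[
p_x(\theta_G^{+}) = p_x(\theta) + \eta\langle \nabla_\theta p_x(\theta), \hat g_G(x)\rangle + R,\qquad |R|\le \tfrac{H}{2}\eta^2\|\hat g_G(x)\|^2 .
\]
Taking expectations and using $\mathbb{E}\|\hat g_G(x)\|^2<\infty$, the remainder contributes at most $\tfrac{H}{2}\eta^2\,\mathbb{E}\|\hat g_G\|^2=O_G(\eta^2)$. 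The constant depends on $G$ through the second moment, which is why the statement writes $O_G$.

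For the first-order term, use the law of total expectation over the events $\{S_G\in\{0,G\}\}$ and $\{1\le S_G\le G-1\}$. By A1, the inner product vanishes almost surely on the first event. By A2, its conditional mean on the second event is $c_x$. Since $\Pr(1\le S_G\le G-1)=1-p^G-(1-p)^G=\Psi_G(p)$, we get
\[
\mathbb{E}[p_x(\theta_G^{+})]=p_x(\theta)+\eta c_x\Psi_G(p)+O_G(\eta^2),
\]
which is the expansion claimed.

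For the comparison across budgets, first show that $\Psi_G(p)$ is strictly increasing in $G$. We have
\[
\Psi_{G+1}-\Psi_G = p^G(1-p)+(1-p)^G p>0 \quad\text{for } p\in(0,1).
\]
These increments decrease in $G$, since each of $p^G$ and $(1-p)^G$ strictly decreases. This gives the diminishing-returns claim. Moreover $\Psi_G\to 1$, so the increments are summable.

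Now fix a finite set $\mathcal{G}$. Let $\delta=\min_{G_1<G_2\in\mathcal{G}} c_x(\Psi_{G_2}-\Psi_{G_1})>0$, and let $M=\max_{G\in\mathcal{G}} \tfrac{H}{2}\mathbb{E}\|\hat g_G\|^2$. Then
\[
\mathbb{E}[p_x(\theta_{G_2}^{+})]-\mathbb{E}[p_x(\theta_{G_1}^{+})]\ge \eta\delta-2M\eta^2 ,
\]
which is positive for all $\eta<\eta_0:=\delta/(2M)$, or for any $\eta$ if $M=0$.

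The expected main obstacle is making the remainder uniform: the $O(\eta^2)$ constant must be controlled simultaneously over the budgets being compared. That is exactly why the conclusion is restricted to a finite $\mathcal{G}$ and uses the global Hessian bound, rather than a local Taylor bound whose validity would depend on the random step size. If only a local Hessian bound were available, I would instead truncate on the event $\{\|\hat g_G\|\le \eta^{-1/2}\}$ and control the complement with Chebyshev's inequality via the second-moment assumption. With the global bound stated in the theorem, this is unnecessary.
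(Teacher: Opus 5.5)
Your proposal is correct and follows the paper's proof essentially step for step: the Taylor expansion with the global Hessian bound, the law of total expectation over the mixed-reward event via A1--A2, the first difference $\Psi_{G+1}-\Psi_G=p^G(1-p)+(1-p)^Gp>0$, and a minimum of thresholds over the finitely many pairs in $\mathcal{G}$. The differences are cosmetic: you make $\eta_0=\delta/(2M)$ explicit, and you get diminishing returns from the decrease of the increments rather than writing out the second difference $-p^2(1-p)^G-(1-p)^2p^G$. Your closing truncation aside is not needed, but note that as described (cutoff at $\eta^{-1/2}$ plus Chebyshev) it leaves an $O(\eta)$ error, so that fallback would need a cutoff of order $1/\eta$ to recover $O_G(\eta^2)$.
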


\begin{proof}
By definition, theoretical expected $\text{avg@k}$ simplifies to the single-sample correctness probability $p_x(\theta)$. Let $E_G := \{1 \le S_G \le G-1\}$ be the event of a mixed-reward batch, which occurs with probability $\Pr(E_G) = 1 - (1-p)^G - p^G = \Psi_G(p)$.

By \hyperref[ass:zero_update]{A\ref{ass:zero_update}}, the update is zero outside $E_G$. Applying \hyperref[ass:constant_cx]{A\ref{ass:constant_cx}}, the expected directional derivative is:
\[
\begin{aligned}
&\mathbb{E}[\langle \nabla_\theta p_x(\theta), \hat g_G(x) \rangle] \\
&\quad= \Pr(E_G) \mathbb{E}[\langle \nabla_\theta p_x(\theta), \hat g_G(x) \rangle \mid E_G] \\
&\quad= c_x \Psi_G(p).
\end{aligned}
\]
Since $p_x(\cdot)$ has a bounded Hessian on the update region, Taylor's theorem
gives
\[
p_x(\theta_G^{+})
=
p_x(\theta)
+
\eta
\left\langle \nabla_\theta p_x(\theta), \hat g_G(x) \right\rangle
+
R_G(\eta),
\]
where, for some Hessian bound $L<\infty$, $\left|R_G(\eta)\right| \le \frac{L}{2}\eta^2\|\hat g_G(x)\|^2.$
Taking expectations and using
$\mathbb{E}[\|\hat g_G(x)\|^2]<\infty$ gives
$\mathbb{E}[R_G(\eta)]=O_G(\eta^2)$, completing the first claim.

To establish monotonicity, we evaluate the marginal increase in $\Psi_G(p)$ for $p \in (0,1)$:
\[
\Psi_{G+1}(p) - \Psi_G(p) = p(1-p)^G + (1-p)p^G > 0,
\]
confirming $\Psi_G(p)$ strictly increases with $G$. For any $G_2 > G_1$, the first-order difference $\eta c_x (\Psi_{G_2}(p) - \Psi_{G_1}(p))$ is strictly positive and dominates the $O_{G_1, G_2}(\eta^2)$ remainder term for $\eta$ strictly less than some threshold $\eta_0(G_1, G_2) > 0$. Taking the minimum $\eta_0$ across all pairs in the finite set $\mathcal{G}$ establishes the strict ordering simultaneously. 

Finally, the second difference of $\Psi_G(p)$ is $-p^2(1-p)^G - (1-p)^2 p^G < 0$, confirming diminishing returns.
\end{proof}

\paragraph{Remark on \hyperref[ass:constant_cx]{Assumption~\ref{ass:constant_cx}}.}
\hyperref[ass:constant_cx]{A\ref{ass:constant_cx}} serves as an idealized first-order approximation to isolate the core mechanism: larger groups increase the mixed-reward probability $\Psi_G(p)$, thereby providing nonzero relative reward signals more frequently. In practice, GRPO normalizes advantages relative to the group (e.g., the positive advantage scales as $\sqrt{(G-s)/s}$), meaning the exact expected magnitude $c_x$ technically depends on $G$. However, this assumption keeps the mechanism analytically transparent and highlights a plausible first-order driver of the empirical monotonicity observed during training.

\paragraph{Remark on advantage-estimation variance.}
Our analysis isolates the \textit{signal-availability} mechanism, demonstrating that a larger \(G\) yields more frequent informative updates. We do not explicitly model the variance reduction in advantage estimation to avoid complex distributional assumptions. Therefore, this theorem formalizes one core driver of the avg@$k$ improvement, acknowledging that other statistical benefits also contribute.

\paragraph{Extension to Dataset-Level Notation.}
While \autoref{thm:avgk_monotone} characterizes the optimization dynamics of a single prompt, explaining the difficulty-dependent behavior requires dataset-level formalization. Let $p_{x,d}(G)$ denote the single-sample correctness probability on a test prompt $x \in \mathcal{D}$ for a policy trained on difficulty subset $d$ using rollout budget $G$. We define the expected average correctness ($\mu_d(G)$) and pass@$k$ over the dataset $\mathcal{D}$ as:
\[
\begin{aligned}
\mu_d(G) &:= \mathbb{E}_{x\sim\mathcal{D}}[p_{x,d}(G)], \\
\text{pass@k}_{\mathcal{D}}^{(d)}(G) &:= \mathbb{E}_{x\sim\mathcal{D}} \!\left[ 1-\bigl(1-p_{x,d}(G)\bigr)^k \right].
\end{aligned}
\]

\begin{theorem}[\textbf{Variance Penalty Decomposition of pass@$k$}]
\label{thm:passk_difficulty}
Fix $k\ge 2$. Let $f_k(p):=1-(1-p)^k$ and define the Jensen gap
\[
J_{k,d}(G)
:=
f_k\!\bigl(\mu_d(G)\bigr)
-
\text{pass@k}_{\mathcal{D}}^{(d)}(G).
\]
Then the following hold:

\begin{enumerate}
    \item $J_{k,d}(G)\ge 0$, with equality if and only if $p_{x,d}(G)$ is constant almost surely over $x\sim\mathcal{D}$.
    \item For any two rollout budgets $G_2>G_1$,
    \[
    \begin{aligned}
    &\text{pass@k}_{\mathcal{D}}^{(d)}(G_2)-\text{pass@k}_{\mathcal{D}}^{(d)}(G_1) \\
    &=
    \underbrace{
    f_k\!\bigl(\mu_d(G_2)\bigr)-f_k\!\bigl(\mu_d(G_1)\bigr)
    }_{\text{gain from higher }\text{avg@k}_{\mathcal{D}}} \\
    &\quad -
    \underbrace{
    \bigl(J_{k,d}(G_2)-J_{k,d}(G_1)\bigr)
    }_{\text{loss from test-set prompt heterogeneity}}.
    \end{aligned}
    \]
\end{enumerate}

Moreover, if $\sigma_d^2(G) := \operatorname{Var}_{x\sim\mathcal{D}}[p_{x,d}(G)]$, then
\[
\begin{aligned}
J_{k,d}(G)
&=
\frac{k(k-1)}{2}
\bigl(1-\mu_d(G)\bigr)^{k-2}
\sigma_d^2(G) \\
&\quad +
O\!\left(
\mathbb{E}_{x\sim\mathcal{D}}
\left[
|p_{x,d}(G)-\mu_d(G)|^3
\right]
\right).
\end{aligned}
\]
Hence, to second order, the penalty term is controlled by the cross-prompt dispersion of one-sample success probabilities on the test set.
\end{theorem}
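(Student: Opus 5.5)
The plan is to treat $f_k(p)=1-(1-p)^k$ as a function on $[0,1]$ and reduce all three claims to elementary properties of $f_k$ together with the definition of $J_{k,d}(G)$. Throughout, fix $d$ and $G$ and write $P:=p_{x,d}(G)$, a random variable in $[0,1]$ induced by $x\sim\mathcal{D}$, with mean $\mu:=\mu_d(G)$. With this notation, $\text{pass@k}_{\mathcal{D}}^{(d)}(G)=\mathbb{E}[f_k(P)]$, so $J_{k,d}(G)=f_k(\mathbb{E}P)-\mathbb{E}f_k(P)$ is exactly a Jensen gap.

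For part 1, I would compute $f_k''(p)=-k(k-1)(1-p)^{k-2}$. For $k\ge 2$ this is $\le 0$ on $[0,1]$, and it is strictly negative on $[0,1)$. Hence $f_k$ is concave, and Jensen's inequality gives $J\ge 0$. For the equality case, one direction is trivial: if $P$ is a.s. constant, then $J=0$. For the converse, I would use strict concavity of $f_k$ on $[0,1]$. Strict concavity holds because $f_k''<0$ except possibly at the single endpoint $p=1$, and a function whose second derivative is negative except at one point is still strictly concave. Concretely, I would write $f_k(p)\le f_k(\mu)+f_k'(\mu)(p-\mu)$, with equality only at $p=\mu$. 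Taking expectations, $J=0$ forces $P=\mu$ a.s.

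Part 2 requires no analysis. Write $\text{pass@k}(G_i)=f_k(\mu_d(G_i))-J_{k,d}(G_i)$ for $i=1,2$ and subtract. This yields the stated decomposition for any $G_1,G_2$; the condition $G_2>G_1$ plays no role.

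For the second-order expansion, Taylor-expand $f_k$ around $\mu$ with a third-order Lagrange remainder:
\[
f_k(P)=f_k(\mu)+f_k'(\mu)(P-\mu)+\tfrac12 f_k''(\mu)(P-\mu)^2+\tfrac16 f_k'''(\xi)(P-\mu)^3 .
\]
Taking expectations, the linear term vanishes. The quadratic term contributes $-\tfrac{k(k-1)}{2}(1-\mu)^{k-2}\sigma^2$, and negating it gives the claimed leading term of $J$. For the remainder, note that $f_k'''(p)=k(k-1)(k-2)(1-p)^{k-3}$ for $k\ge 3$ and is $0$ for $k=2$. In either case it is bounded in absolute value by $k(k-1)(k-2)$ on $[0,1]$, so the remainder is $O(\mathbb{E}|P-\mu|^3)$ with a constant depending only on $k$.

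The only step needing care is the strictness argument for the equality case near $p=1$, where $f_k''$ can vanish. The tangent-line argument with strict concavity handles this cleanly.
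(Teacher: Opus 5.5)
Your proposal is correct and follows the paper's proof essentially step for step: Jensen's inequality for the concave $f_k$, with strict concavity giving the equality case; the rewrite-and-subtract identity for the decomposition; and a Taylor expansion about $\mu_d(G)$ in which the linear term vanishes. You also tighten two points the paper leaves implicit: you handle the endpoint $p=1$ explicitly, whereas the paper only asserts strict concavity on $(0,1)$, and you give the uniform bound $|f_k'''|\le k(k-1)(k-2)$ on $[0,1]$, which makes the $O(\cdot)$ remainder constant depend only on $k$.
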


\begin{proof}
For $k\ge 2$ and $p \in (0, 1)$,
\[
\begin{aligned}
f_k'(p) &= k(1-p)^{k-1} > 0, \\
f_k''(p) &= -k(k-1)(1-p)^{k-2} < 0,
\end{aligned}
\]
so $f_k$ is strictly increasing and strictly concave on $(0,1)$.

By definition, the expected mean is
\[
\mu_d(G)
=
\mathbb{E}_{x\sim\mathcal{D}}[p_{x,d}(G)],
\]
and the expected pass rate is
\[
\text{pass@k}_{\mathcal{D}}^{(d)}(G)
=
\mathbb{E}_{x\sim\mathcal{D}}[f_k(p_{x,d}(G))].
\]
Applying Jensen's inequality to the concave function $f_k$,
\[
\begin{aligned}
\text{pass@k}_{\mathcal{D}}^{(d)}(G)
&=
\mathbb{E}_{x\sim\mathcal{D}}[f_k(p_{x,d}(G))] \\
&\le
f_k\!\left(\mathbb{E}_{x\sim\mathcal{D}}[p_{x,d}(G)]\right) \\
&=
f_k\!\bigl(\mu_d(G)\bigr).
\end{aligned}
\]
Thus $J_{k,d}(G)\ge 0$. Since $f_k$ is strictly concave, equality holds if and only if $p_{x,d}(G)$ is constant almost surely.

Now rewrite
\[
\text{pass@k}_{\mathcal{D}}^{(d)}(G)=f_k\!\bigl(\mu_d(G)\bigr)-J_{k,d}(G).
\]
Subtracting the identities for $G_2$ and $G_1$ yields
\[
\begin{aligned}
&\text{pass@k}_{\mathcal{D}}^{(d)}(G_2)-\text{pass@k}_{\mathcal{D}}^{(d)}(G_1) \\
&=
\Bigl(
f_k\!\bigl(\mu_d(G_2)\bigr)-f_k\!\bigl(\mu_d(G_1)\bigr)
\Bigr) \\
&\quad -
\Bigl(
J_{k,d}(G_2)-J_{k,d}(G_1)
\Bigr),
\end{aligned}
\]
which gives the exact decomposition.

For the second-order approximation, apply Taylor's theorem around $\mu_d(G)$:
\[
\begin{aligned}
f_k(p_{x,d}(G))
&=
f_k\!\bigl(\mu_d(G)\bigr) \\
&\quad +
f_k'\!\bigl(\mu_d(G)\bigr)\bigl(p_{x,d}(G)-\mu_d(G)\bigr) \\
&\quad +
\frac{1}{2}
f_k''\!\bigl(\mu_d(G)\bigr)\bigl(p_{x,d}(G)-\mu_d(G)\bigr)^2 \\
&\quad +
O(|p_{x,d}(G)-\mu_d(G)|^3) .
\end{aligned}
\]
Taking the expectation over $x\sim\mathcal{D}$, the linear term vanishes because $\mathbb{E}_{x\sim\mathcal{D}}[p_{x,d}(G)-\mu_d(G)]=0$. Hence
\[
\begin{aligned}
\mathbb{E}_{x\sim\mathcal{D}}[f_k(p_{x,d}(G))]
&=
f_k\!\bigl(\mu_d(G)\bigr) \\
&\quad +
\frac{1}{2}
f_k''\!\bigl(\mu_d(G)\bigr)\sigma_d^2(G) \\
&\quad +
O\Bigl( \mathbb{E}_{x\sim\mathcal{D}} \bigl[|p_{x,d}(G) \\
&\qquad\qquad-\mu_d(G)|^3 \bigr] \Bigr).
\end{aligned}
\]
Rearranging gives
\[
\begin{aligned}
J_{k,d}(G)
&=
-\frac{1}{2}
f_k''\!\bigl(\mu_d(G)\bigr)\sigma_d^2(G) \\
&\quad +
O\Bigl( \mathbb{E}_{x\sim\mathcal{D}} \bigl[ |p_{x,d}(G)-\mu_d(G)|^3 \bigr] \Bigr),
\end{aligned}
\]
and substituting $f_k''(\mu)=-k(k-1)(1-\mu)^{k-2}$ completes the proof.
\end{proof}

\paragraph{Connection to Empirical Regimes.}
Rather than positing a universal law applicable to all training environments, \autoref{thm:passk_difficulty} provides a rigorous analytical lens to interpret the specific empirical observations reported in \hyperref[sec:adaptive_rollout]{Section~\ref{sec:adaptive_rollout}}. Specifically, while the theorem formally establishes the mathematical \textit{mechanism}---that the dynamics of $\text{pass@k}_{\mathcal{D}}$ are entirely governed by a trade-off between the mean improvement gain ($\Delta f_k$) and the cross-prompt variance penalty ($\Delta J_{k,d}$)---the actual increase of this variance ($\sigma_d^2(G)$) is an \textit{empirical characteristic} dictated by the intrinsic difficulty of the dataset and specific training dynamics. Consequently, while these exact behavioral patterns may not identically manifest in every setup, bridging this theoretical mechanism with our specific findings allows us to formally characterize the three observed regimes as follows:
\[
\begin{aligned}
&\text{\textbf{Easy}:} \quad \Delta_G J_{k,\text{Easy}}(G) > \Delta_G f_{k,\text{Easy}}(G) \\
& \Longrightarrow\quad \text{pass@k}_{\mathcal{D}}^{(\text{Easy})}(G+1) < \text{pass@k}_{\mathcal{D}}^{(\text{Easy})}(G), \\[2mm]
&\text{\textbf{Hard}:} \quad \Delta_G J_{k,\text{Hard}}(G) < \Delta_G f_{k,\text{Hard}}(G) \\
& \Longrightarrow\quad \text{pass@k}_{\mathcal{D}}^{(\text{Hard})}(G+1) > \text{pass@k}_{\mathcal{D}}^{(\text{Hard})}(G), \\[2mm]
&\text{\textbf{Medium}:} \quad \Delta_G J_{k,\text{Medium}}(G) \\
&\qquad\qquad \text{crosses}\ \Delta_G f_{k,\text{Medium}}(G) \\
& \Longrightarrow\quad \text{pass@k}_{\mathcal{D}}^{(\text{Medium})}(G)\ \text{is non-monotonic}.
\end{aligned}
\]
where $\Delta_G J_{k,d}(G) = J_{k,d}(G+1)-J_{k,d}(G)$ and $\Delta_G f_{k,d}(G) = f_k(\mu_d(G+1))-f_k(\mu_d(G))$.

Analytically, the second-order approximation reveals that the penalty term $J_{k,d}(G)$ is directly proportional to the cross-prompt variance $\sigma_d^2(G)$. This theoretically explains the paradoxical degradation in $\text{pass@k}_{\mathcal{D}}$ observed when training on the \textbf{Easy} dataset. When the rollout budget $G$ is large, the policy over-exploits specific, easily solvable templates rather than learning generalizable reasoning skills. Evaluated on a general test set, this causes severe polarization: the success probability $p_{x,d}(G)$ converges to $1$ for familiar easy prompts but remains near $0$ for unseen or harder problems. Consequently, the surge in the variance penalty ($\Delta_G J_{k,\text{Easy}}(G)$) outweighs the marginal gains in mean performance ($\Delta_G f_{k,\text{Easy}}(G)$), resulting in a decrease in the overall $\text{pass@k}_{\mathcal{D}}$.

Conversely, on the \textbf{Hard} dataset, valid reasoning paths are scarce. An increased rollout budget promotes broader exploration, enabling the model to internalize more generalizable reasoning patterns. This uniformly improves the test-set mean $\mu_d(G)$ across prompts, maintaining low cross-prompt variance and keeping the associated penalty minimal.

\section{Experimental Details of \hyperref[sec:adaptive_rollout]{Section~\ref{sec:adaptive_rollout}}}
\label{app:experimental_details_adaptive}
We train Qwen2.5-3B-Instruct \citep{qwen2.5} on the MATH \citep{hendrycks2021measuring} dataset, which we partition into Easy, Medium, and Hard subsets based on the base model's empirical accuracy. Specifically, for each problem, we sample 12 independent solutions from the base model and categorize problems according to the number of correct samples. Problems for which all 12 samples are either entirely correct or entirely incorrect are excluded, as they provide no informative signal for distinguishing difficulty. Among the remaining problems, those with 1--4 correct samples are categorized as \textit{Hard}, 5--8 as \textit{Medium}, and 9--11 as \textit{Easy}. This results in 2,217 Easy, 1,433 Medium, and 1,712 Hard problems. To ensure a balanced training distribution, we subsample each subset to 1,400 problems. The model was trained for 5 epochs on each difficulty-specific dataset.

\begin{table}[t]
\centering
\setlength{\tabcolsep}{6pt}
\renewcommand{\arraystretch}{1.1}
\begin{tabular}{lc}
\toprule
\textbf{Training Configuration} & \textbf{Wall Clock (Hours)} \\
\midrule
Easy / $G=4$  & 5.0  \\
Easy / $G=8$  & 9.2  \\
Easy / $G=16$ & 16.4 \\
\hdashline
Medium / $G=4$  & 5.4  \\
Medium / $G=8$  & 9.4  \\
Medium / $G=16$ & 16.6 \\
\hdashline
Hard / $G=4$  & 5.5  \\
Hard / $G=8$  & 9.9  \\
Hard / $G=16$ & 17.5 \\
\bottomrule
\end{tabular}
\caption{Wall Clock Time for each difficulty subset and rollout budget.}
\label{tab:adaptive_training_cost}
\end{table}

Using GRPO \citep{shao2024deepseekmath}, we conduct a total of 9 training runs, applying rollout budgets of $G \in \{4, 8, 16\}$ for each difficulty subset. Our implementation is based on a modified version of the GRPO-Zero repository\footnote{\url{https://github.com/policy-gradient/GRPO-Zero}}, which adopts a token-level policy gradient without KL regularization or clipping. We use a binary verifiable reward, assigning 1 to correct final answers and 0 otherwise. During training, all samples are generated with temperature 1.0 and a maximum generation length of 1024 tokens. We fix all optimization hyperparameters, including a batch size of 256, while adjusting the number of problems per batch (64, 32, 16) to realize group sizes of 4, 8, and 16, respectively. All experiments are conducted on a single A100 GPU. The wall-clock times for each training configuration are reported in \autoref{tab:adaptive_training_cost}.

We evaluate the resulting 9 models on three benchmarks: MATH500, AIME25, and AIME24. For each problem, we generate 256 reasoning paths in parallel. We compute pass@$k$ for $k \in \{1,2,4,8,16,32,64,128,256\}$, and report results for each model trained under different difficulty subsets and rollout budgets (\autoref{fig:appendix_passk_3x3}). In addition, we aggregate results across the three benchmarks and report the averaged avg@256 and pass@256, as shown in \autoref{fig:adaptive_results}.

\begin{figure*}[t]
\centering
\begin{subfigure}[b]{0.32\textwidth}
    \centering
    \includegraphics[width=\linewidth]{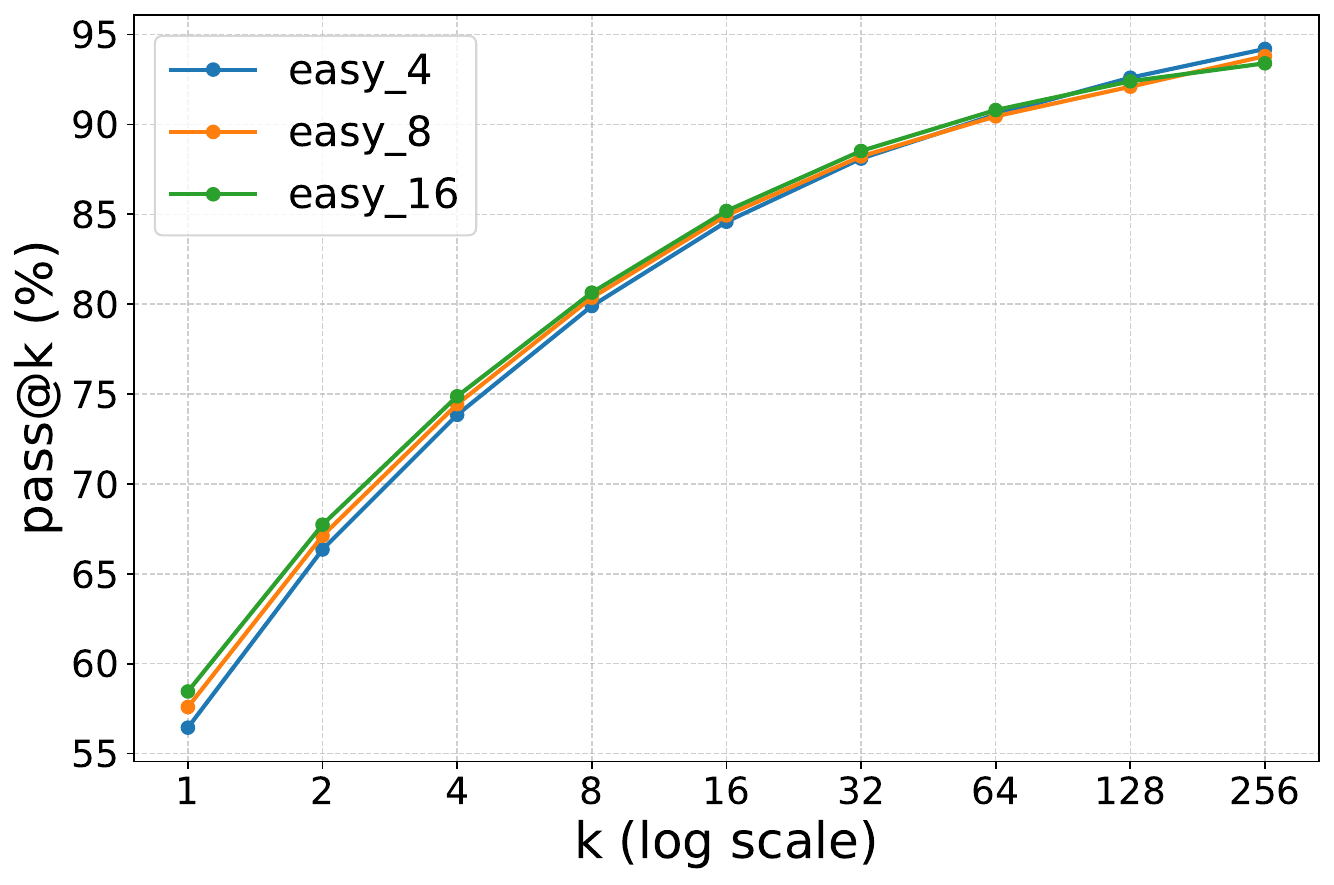}
    \caption{Easy / MATH500}
\end{subfigure}
\hfill
\begin{subfigure}[b]{0.32\textwidth}
    \centering
    \includegraphics[width=\linewidth]{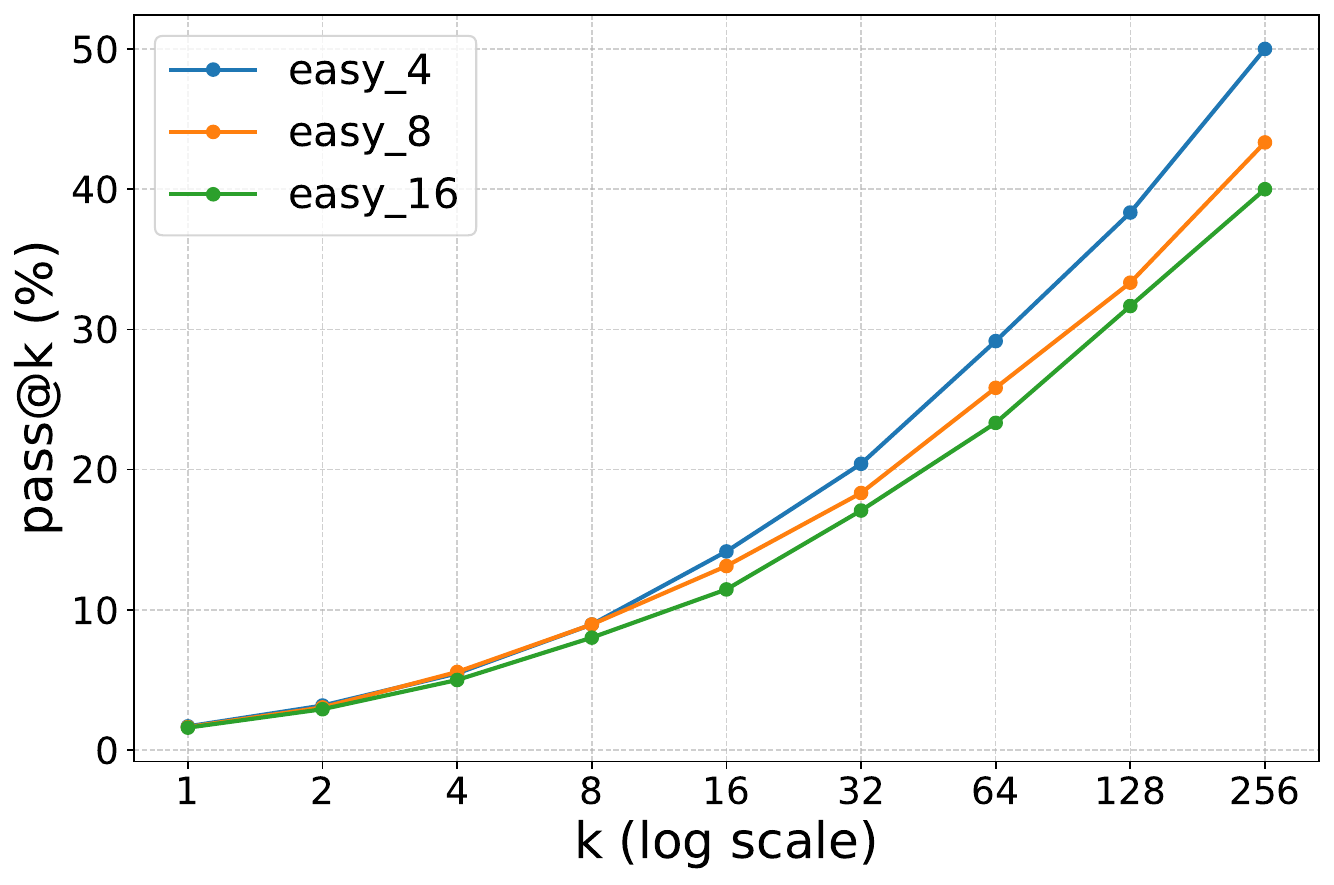}
    \caption{Easy / AIME25}
\end{subfigure}
\hfill
\begin{subfigure}[b]{0.32\textwidth}
    \centering
    \includegraphics[width=\linewidth]{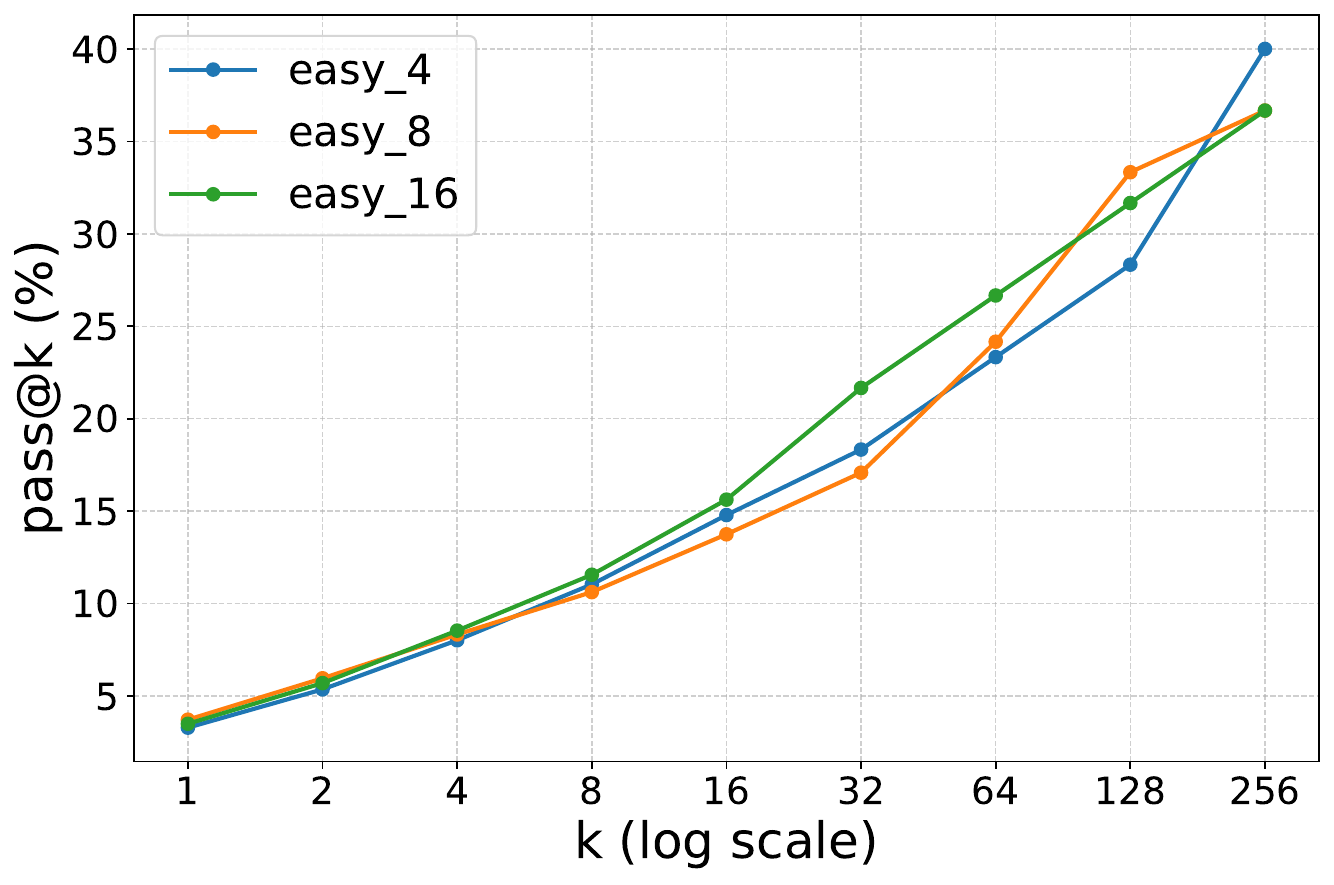}
    \caption{Easy / AIME24}
\end{subfigure}

\vspace{4pt}

\begin{subfigure}[b]{0.32\textwidth}
    \centering
    \includegraphics[width=\linewidth]{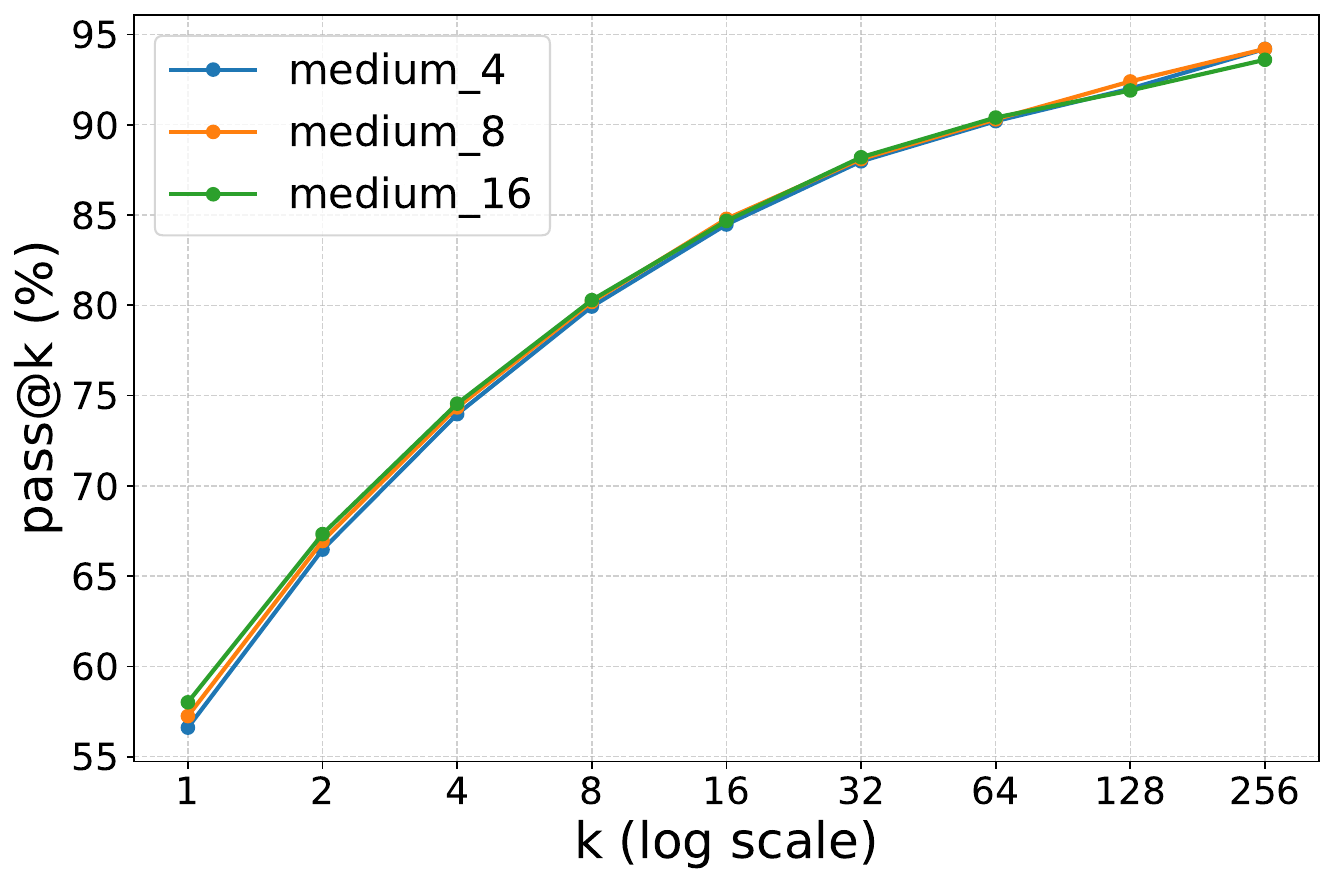}
    \caption{Medium / MATH500}
\end{subfigure}
\hfill
\begin{subfigure}[b]{0.32\textwidth}
    \centering
    \includegraphics[width=\linewidth]{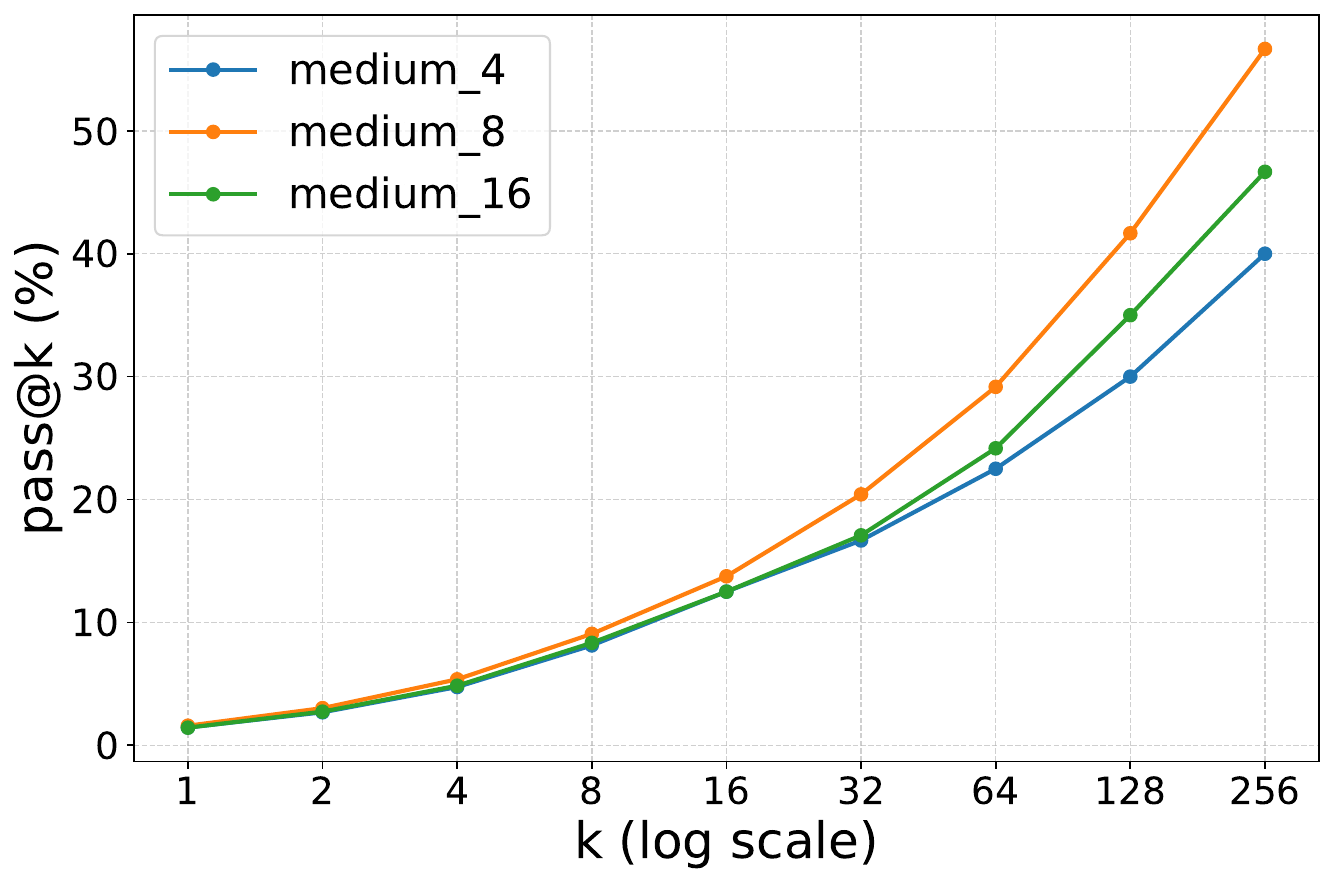}
    \caption{Medium / AIME25}
\end{subfigure}
\hfill
\begin{subfigure}[b]{0.32\textwidth}
    \centering
    \includegraphics[width=\linewidth]{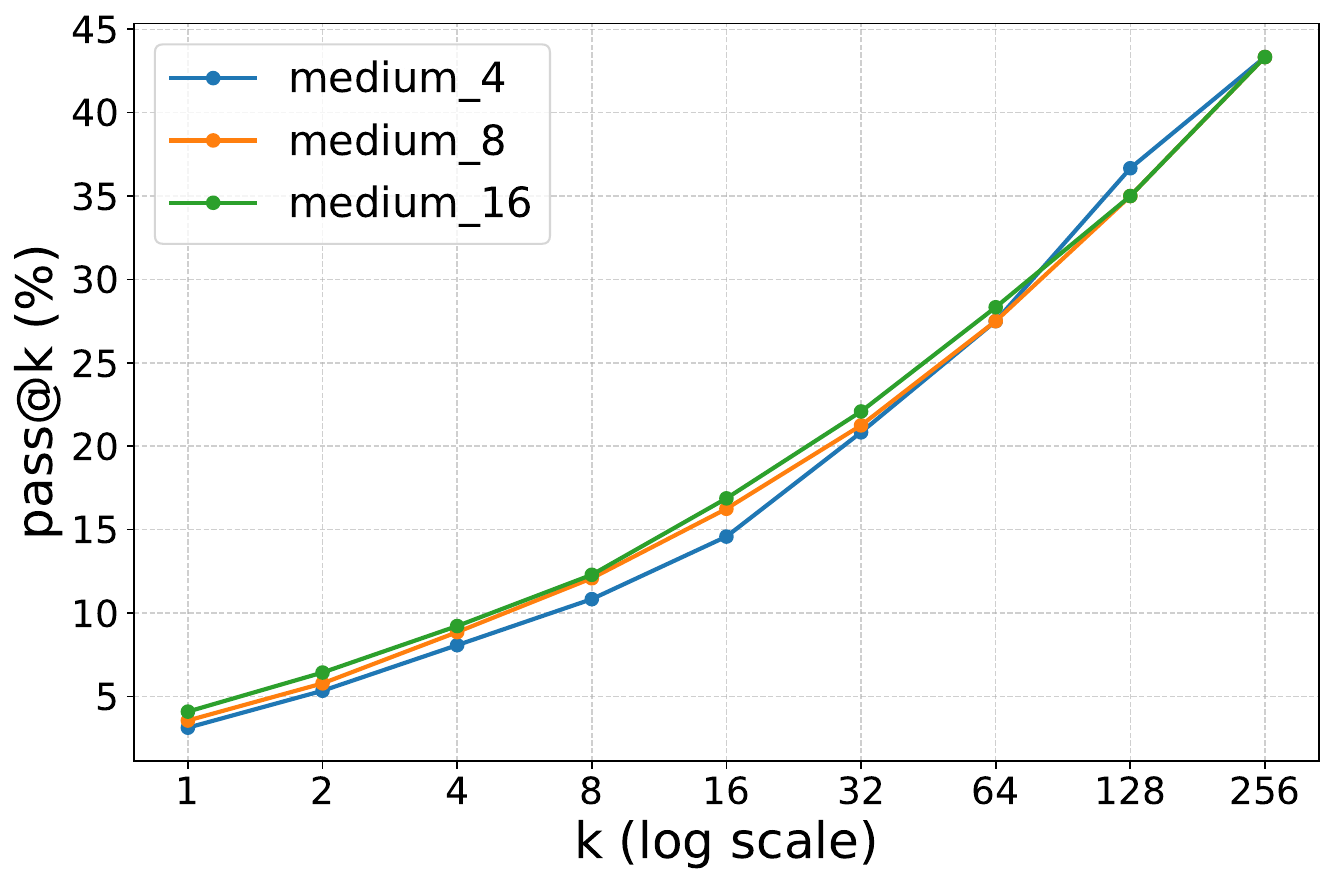}
    \caption{Medium / AIME24}
\end{subfigure}

\vspace{4pt}

\begin{subfigure}[b]{0.32\textwidth}
    \centering
    \includegraphics[width=\linewidth]{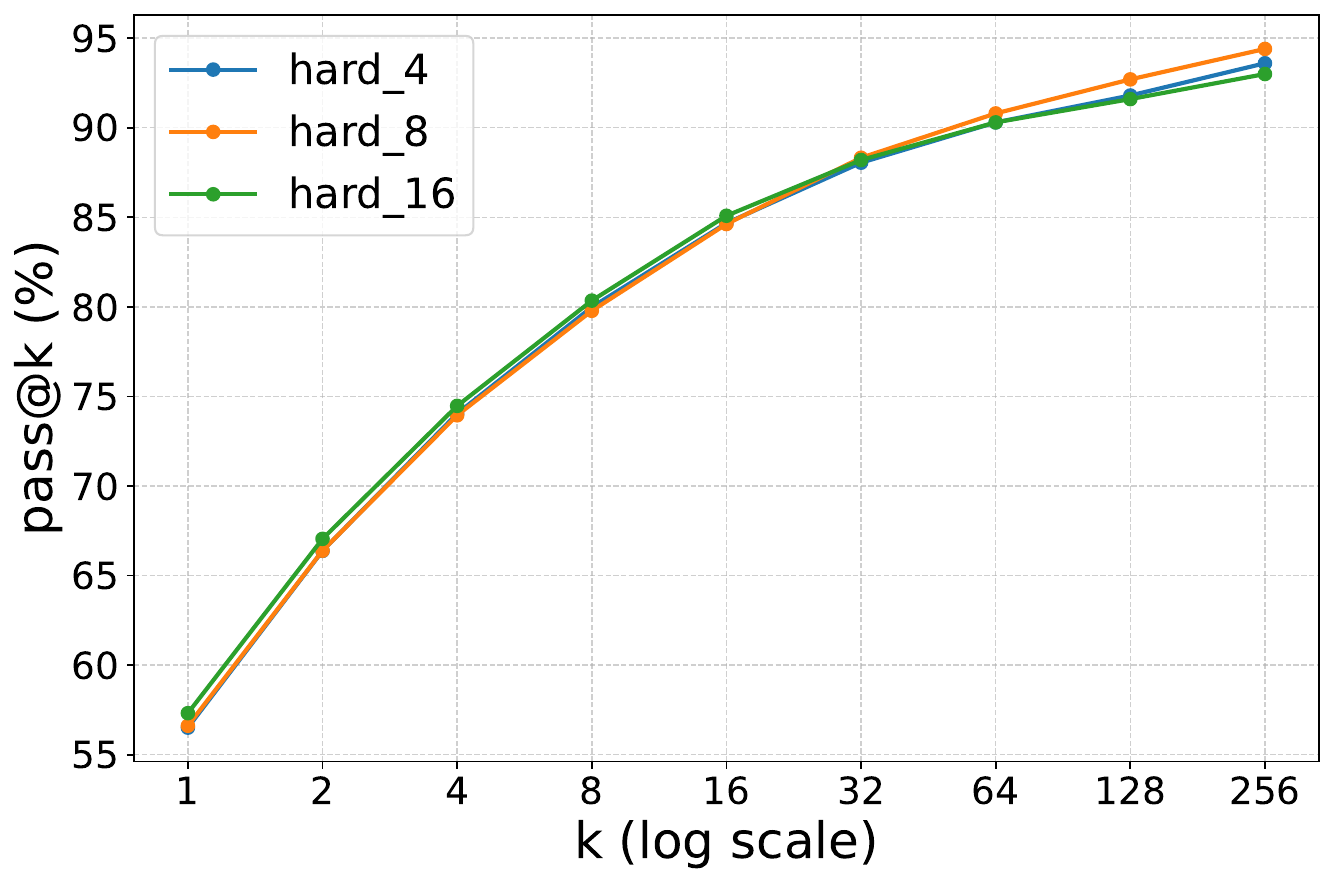}
    \caption{Hard / MATH500}
\end{subfigure}
\hfill
\begin{subfigure}[b]{0.32\textwidth}
    \centering
    \includegraphics[width=\linewidth]{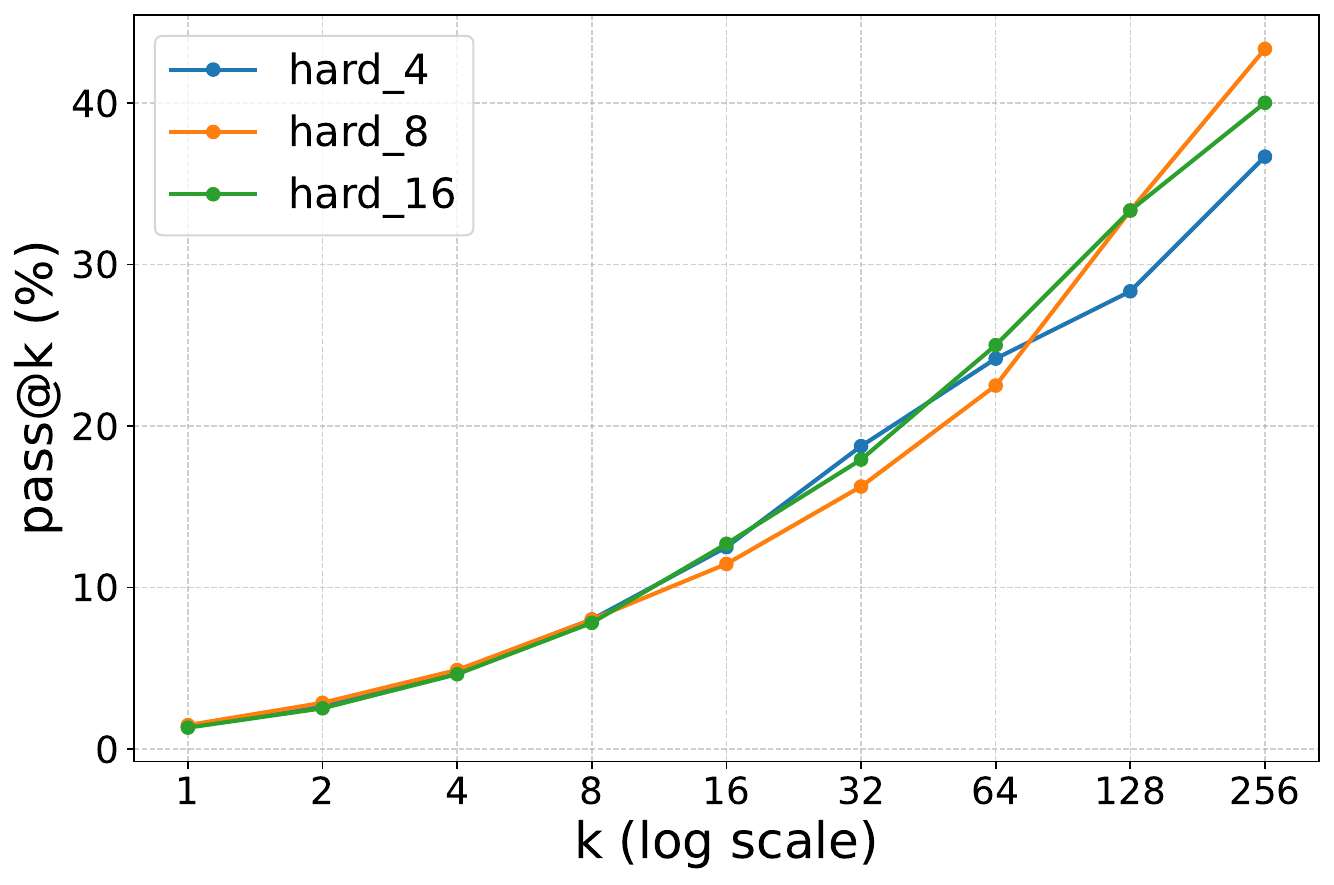}
    \caption{Hard / AIME25}
\end{subfigure}
\hfill
\begin{subfigure}[b]{0.32\textwidth}
    \centering
    \includegraphics[width=\linewidth]{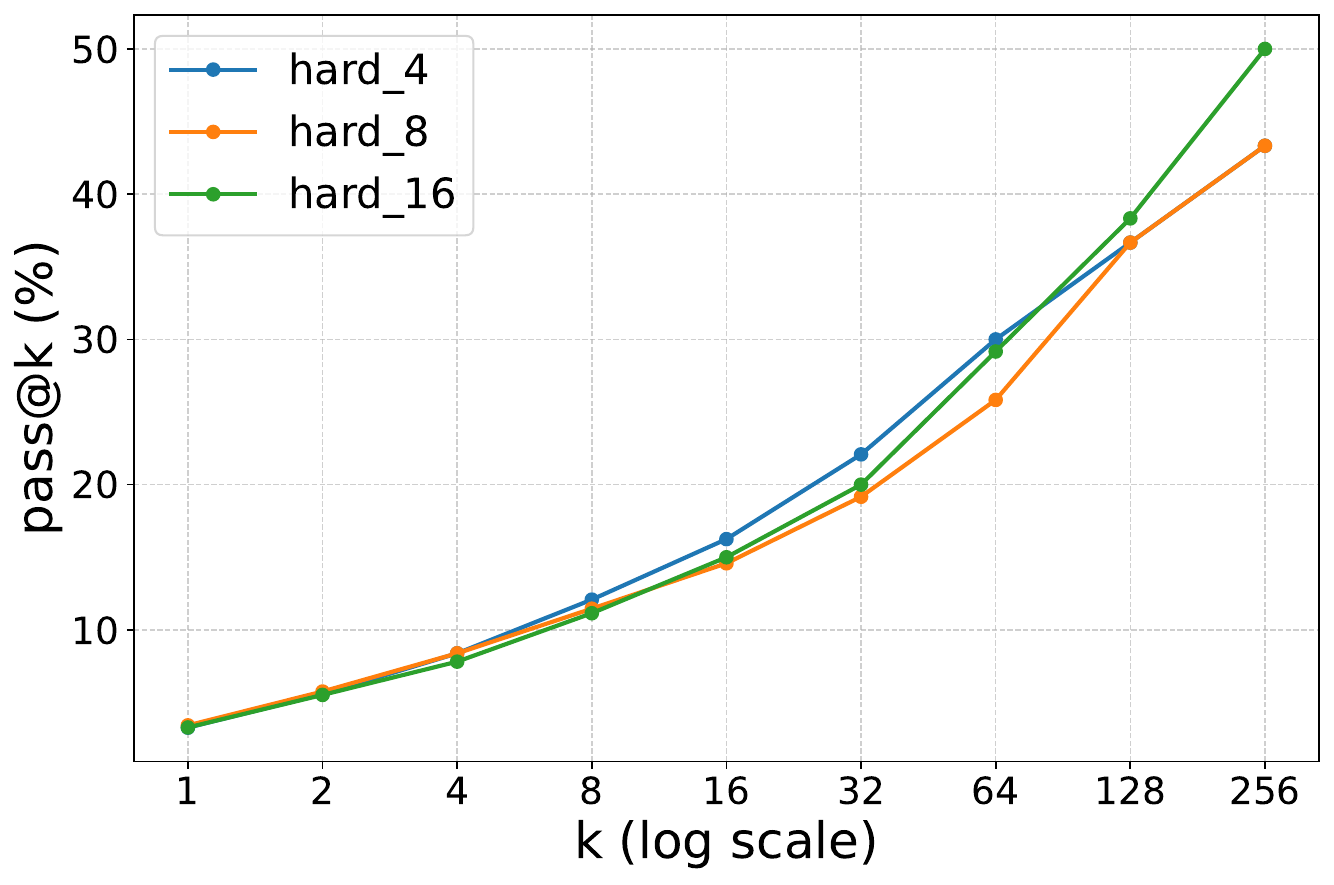}
    \caption{Hard / AIME24}
\end{subfigure}

\caption{pass@$k$ performance on evaluation benchmarks for models trained on different difficulty subsets. Labels such as \texttt{easy\_8} denote models trained on the Easy subset with group size $G=8$.}
\label{fig:appendix_passk_3x3}
\end{figure*}

\section{Details on Tree-Structured Rollout (\hyperref[sec:tree_vs_parallel]{Section \ref{sec:tree_vs_parallel}})}
\label{app:tree_vs_parallel}

\subsection{Tree Algorithm Details}
\label{app:tree_algorithm}
\hyperref[alg:tree_rollout]{Algorithm \ref{alg:tree_rollout}} provides the detailed procedure of the two-phase tree rollout strategy.

\begin{algorithm}[t]
\caption{Tree Rollout}
\label{alg:tree_rollout}
\renewcommand{\algorithmicrequire}{\textbf{Input:}}
\renewcommand{\algorithmicensure}{\textbf{Output:}}
\begin{algorithmic}
\REQUIRE Policy $\pi_\theta$, Prompt $x$, Base rollouts $N$, Forking points $K$, Branch rollouts $B$, Selection \texttt{method}.
\ENSURE Trajectory set $\mathcal{T}$
\STATE Initialize $\mathcal{R} \leftarrow \emptyset$, $\mathcal{T} \leftarrow \emptyset$

\STATE \textbf{Phase 1: Base Rollouts}
\FOR{$i = 1$ to $N$}
    \STATE Sample base trajectory $\tau^{(i)} \sim \pi_\theta(\cdot \mid x)$
    \STATE $\mathcal{R} \leftarrow \mathcal{R} \cup \{\tau^{(i)}\}$
    \STATE $\mathcal{T} \leftarrow \mathcal{T} \cup \{\tau^{(i)}\}$
\ENDFOR

\STATE \textbf{Phase 2: Branch Rollouts}
\FOR{each $\tau \in \mathcal{R}$}
    \STATE Select $K$ forking points $\mathcal{F} \subset \{1, \dots, |\tau|\}$ via \autoref{alg:forking_selection}
    \FOR{each $f \in \mathcal{F}$}
        \STATE Extract trajectory prefix $x_f \leftarrow [x; \tau_{1:f}]$
        \FOR{$j = 1$ to $B$}
            \STATE Sample continuation $\tau^{\prime(j)}_f \sim \pi_\theta(\cdot \mid x_f)$
            \STATE $\tilde{\tau}^{(j)}_f \leftarrow [\tau_{1:f}; \tau^{\prime(j)}_f]$
            \STATE $\mathcal{T} \leftarrow \mathcal{T} \cup \{\tilde{\tau}^{(j)}_f\}$
        \ENDFOR
    \ENDFOR
\ENDFOR
\end{algorithmic}
\end{algorithm}

\subsection{Experimental Details of \hyperref[sec:tree_or_parallel_analysis]{Section \ref{sec:tree_or_parallel_analysis}}}
\label{app:experimental_details_tree_parallel}

To evaluate cost-efficiency during inference, we measure the PassRate---the probability of discovering at least one correct solution---against token consumption on the AIME25 benchmark using Qwen2.5-3B-Instruct. We compare standard parallel sampling with two tree-structured variants: \texttt{fixed-seg} (equidistant branching) and \texttt{random} (uniform random branching). Detailed procedures for these forking strategies are provided in \hyperref[app:forking_point_algorithm]{Appendix \ref{app:forking_point_algorithm}}.

\paragraph{Formal Definition of PassRate}
For parallel sampling, the model generates $M$ independent trajectories, $\mathcal{T}_{\text{parallel}} = \{\tau^{(1)}, \dots, \tau^{(M)}\}$. The PassRate is defined as:
$$ \text{PassRate}_{\text{parallel}} = \mathbb{I}\left(\sum_{i=1}^{M} r(\tau^{(i)}) \ge 1\right) $$
where $r(\cdot) \in \{0, 1\}$ is the verifiable binary reward. Token consumption scales linearly with $M$.

For tree-structured rollouts, generating $N$ base rollouts, $K$ forking points per base, and $B$ branches per forking point yields a set of leaf trajectories $\mathcal{T}_{\text{tree}}$. The total number of terminal paths is $N(1 + KB)$. The PassRate is evaluated over these leaves:
$$ \text{PassRate}_{\text{tree}} = \mathbb{I}\left(\sum_{\tau \in \mathcal{T}_{\text{tree}}} r(\tau) \ge 1\right) $$
Unlike parallel sampling, tree rollouts consume tokens sub-linearly relative to the number of terminal paths, as context prefixes up to each forking point are computed only once and shared across branches.

\paragraph{Configurations}
To analyze PassRate against token consumption (\autoref{fig:tree_parallel}), we varied the computational budget for both methods. For parallel sampling, we evaluated independent rollout counts of $M \in \{4, 8, 16, 32\}$. For tree rollouts, we fixed the branching parameters at $K=2$ and $B=2$, and scaled the base rollouts $N \in \{2, 4, 8\}$. These configuration tuples $(N, K, B)$ yield 10, 20, and 40 terminal paths, respectively, enabling a direct comparison of exploration efficiency relative to token expenditure. Detailed forking point selection algorithms are provided in \hyperref[app:forking_point_algorithm]{Appendix~\ref{app:forking_point_algorithm}}.

\section{Details on Forking Point Selection Strategies (\hyperref[sec:optimal_forking]{Section \ref{sec:optimal_forking}})}
\label{app:optimal_forking}

\begin{algorithm}[t]
\caption{Forking Points Selection Algorithms}
\label{alg:forking_selection}
\renewcommand{\algorithmicrequire}{\textbf{Input:}}
\renewcommand{\algorithmicensure}{\textbf{Output:}}
\begin{algorithmic}
\REQUIRE Trajectory $\tau$, Number of forking points $K$, Selection \texttt{method}, Token entropies $\{H_t\}_{t=1}^{|\tau|}$, Parsed sentences $\mathcal{S} = \{S_1, \dots, S_M\}$, Sentence entropies $\{H^{S}_m\}_{m=1}^M$, Parsed steps $\mathcal{Z} = \{Z_1, \dots, Z_L\}$, Step-level FCI scores $\{y_l\}_{l=1}^L$.
\ENSURE Forking points $\mathcal{F} \subset \{1, \dots, |\tau|\}$, where $|\mathcal{F}| = K$

\IF{\texttt{method} is \texttt{random}}
    \STATE Sample $\mathcal{F} \subset \{1, \dots, |\tau|\}$ uniformly at random
\ELSIF{\texttt{method} is \texttt{fixed-seg}}
    \STATE $\mathcal{F} \leftarrow \left\{ \left\lfloor k \cdot \frac{|\tau|}{K+1} \right\rfloor \;\middle|\; k = 1, \dots, K \right\}$
\ELSIF{\texttt{method} is \texttt{ATB}}
    \STATE $\mathcal{C} \leftarrow \{ l \in \{1, \dots, L\} \mid $
    \STATE \quad $ y_l \ge \text{Quantile}(y_1, \dots, y_L, 0.8) \}$
    \STATE $ \mathcal{L}^* \leftarrow$ the $K$ smallest indices (earliest steps) from $\mathcal{C}$
    \STATE $\mathcal{F} \leftarrow \{\nu_l \mid l \in \mathcal{L}^*\}$
\ELSIF{\texttt{method} is \texttt{tok-entropy}}
    \STATE $\mathcal{F} \leftarrow \mathop{\mathrm{arg\,top-}K}\limits_{t \in \{1, \dots, |\tau|\}} H_t$
\ELSIF{\texttt{method} is \texttt{sent-entropy}}
    \STATE $\mathcal{M}^* \leftarrow \mathop{\mathrm{arg\,top-}K}\limits_{m \in \{1, \dots, M\}} H^{S}_m$
    \STATE $\mathcal{F} \leftarrow \{\mu_m \mid m \in \mathcal{M}^*\}$
\ENDIF

\RETURN $\mathcal{F}$
\end{algorithmic}
\end{algorithm}

\subsection{Forking Point Selection Algorithms}
\label{app:forking_point_algorithm}

We detail the five forking point selection algorithms in \hyperref[alg:forking_selection]{Algorithm~\ref{alg:forking_selection}}, which is utilized in Phase 2 Branch Rollouts. Let a trajectory be denoted as a sequence of tokens $\tau = (y_1, y_2, \dots, y_{|\tau|})$. For entropy-based methods, let $H_t$ denote the token-level entropy at step $t$, which is computed using the language model's probability distribution over the vocabulary $V$:
$$ H_t = -\sum_{v \in V} P(v \mid y_{<t}) \log P(v \mid y_{<t}) $$

For the sentence-level method, we parse the trajectory into $M$ sentences, $\mathcal{S} = \{S_1, S_2, \dots, S_M\}$, by aligning character-level sentence boundaries detected via PySBD \cite{sadvilkar-neumann-2020-pysbd} with the tokenizer's offset mapping. Each sentence $S_m$ represents a contiguous span of token indices, and $\mu_m$ denotes the start token index of $S_m$. The sentence entropy $H^{S}_m$ is defined as the average token entropy within that sentence to mitigate length bias:
$$ H^{S}_m = \frac{1}{|S_m|} \sum_{t \in S_m} H_t $$

Attention-based Tree Branching (\texttt{ATB}) operates on the premise that attention scores serve as meaningful metrics to identify important reasoning behaviors. To leverage this insight, the trajectory is first segmented into $L$ discrete steps (e.g., delimited by line breaks), denoted as $\mathcal{Z} = \{Z_1, \dots, Z_L\}$, where $\nu_l$ represents the start token index of step $Z_l$. ATB then computes a Forward Context Influence (FCI) score $y_l$ for each step, quantifying its influence on subsequent tokens by aggregating attention weights. When selecting forking points, ATB first isolates a candidate set consisting of the top 20\% of steps with the highest FCI scores. From this set, it selects the $K$ earliest steps as the final forking points.

Lastly, the operator $\mathop{\mathrm{arg\,top-}K}$ denotes the extraction of the subset of indices that correspond to the $K$ highest values of a given metric.

\subsection{Quantitative and Qualitative Analysis of Forking-Point Localization}
\label{app:localiztion_and_case_study}
As illustrated in \autoref{fig:localization}, high-entropy tokens tend to densely cluster within narrow, highly uncertain segments of the reasoning trajectory. Consequently, the search budget is monopolized by repeated resampling at a single localized point, which severely limits the structural reach of the search tree. 

To quantitatively characterize the localization phenomenon, we measure the distribution of selected forking points using three metrics: Nearest Neighbor Distance (NND), the average distance to the closest forking point; Mean Pairwise Distance (MPD), the average pairwise distance among all selected points; and Window-based Clustering Ratio (WCR@$P$), the percentage of forking-point pairs whose distance falls within $P\%$ of the trajectory length. Lower NND and MPD and higher WCR indicate stronger localization. 

\begin{table}[t]
    \centering
    \small
    \begin{tabular}{@{}lccc@{}}
        \toprule
        \textbf{Metric} 
        & \texttt{random}
        & \texttt{sent-entropy}
        & \texttt{tok-entropy} \\
        \midrule
        NND $\downarrow$    & 0.157 & 0.130 & \textbf{0.106} \\
        MPD $\downarrow$    & 0.337 & 0.296 & \textbf{0.251} \\
        WCR@5 $\uparrow$    & 8.9   & 12.9  & \textbf{23.6}  \\
        WCR@10 $\uparrow$   & 16.4  & 25.3  & \textbf{35.3}  \\
        \bottomrule
    \end{tabular}
    \caption{Quantitative comparison of forking-point localization across different selection strategies. Arrows indicate stronger localization.}
    \label{tab:localization_quantitative}
\end{table}

As shown in \autoref{tab:localization_quantitative}, \texttt{tok-entropy} exhibits the strongest localization, with the lowest NND (0.106) and MPD (0.251) and the highest WCR@5 (23.6\%) and WCR@10 (35.3\%). In comparison, \texttt{sent-entropy} increases NND and MPD to 0.130 and 0.296, respectively, while reducing WCR@5 and WCR@10 to 12.9\% and 25.3\%, quantitatively demonstrating that sentence-level entropy mitigates the localization of forking points.

To further observe how the selected tokens differ across forking strategies, we conducted a case study comparing \texttt{tok-entropy}, \texttt{sent-entropy}, and \texttt{ATB}. \autoref{fig:forking_point_case_study} visualizes the forking points ($K=4$) selected by each method along a single, fixed reasoning path. The \texttt{tok-entropy} method suffers from localization, trapping the selected points within a narrow segment. In contrast, \texttt{sent-entropy} still targets the model's uncertain regions but distributes the forking points much more evenly, operating at a broader semantic level. Meanwhile, \texttt{ATB} selects steps receiving the highest attention weights; while this can be meaningful for dividing logical reasoning steps, it operates independently of the model's uncertainty. As shown in \autoref{fig:forking_point_case_study}, \texttt{ATB} often selects highly deterministic points to branch from. Performing additional sampling at these points yields low SibDiv and, consequently, a limited PassRate, as reflected in \autoref{tab:forking_point_results}. Based on these observations, \texttt{sent-entropy} proves to be the most effective forking strategy by successfully leveraging uncertainty signals while resolving the localization issue.

\begin{figure*}[t!]
    \centering
    \begin{minipage}{0.48\linewidth}
        \centering
        \includegraphics[width=\linewidth]{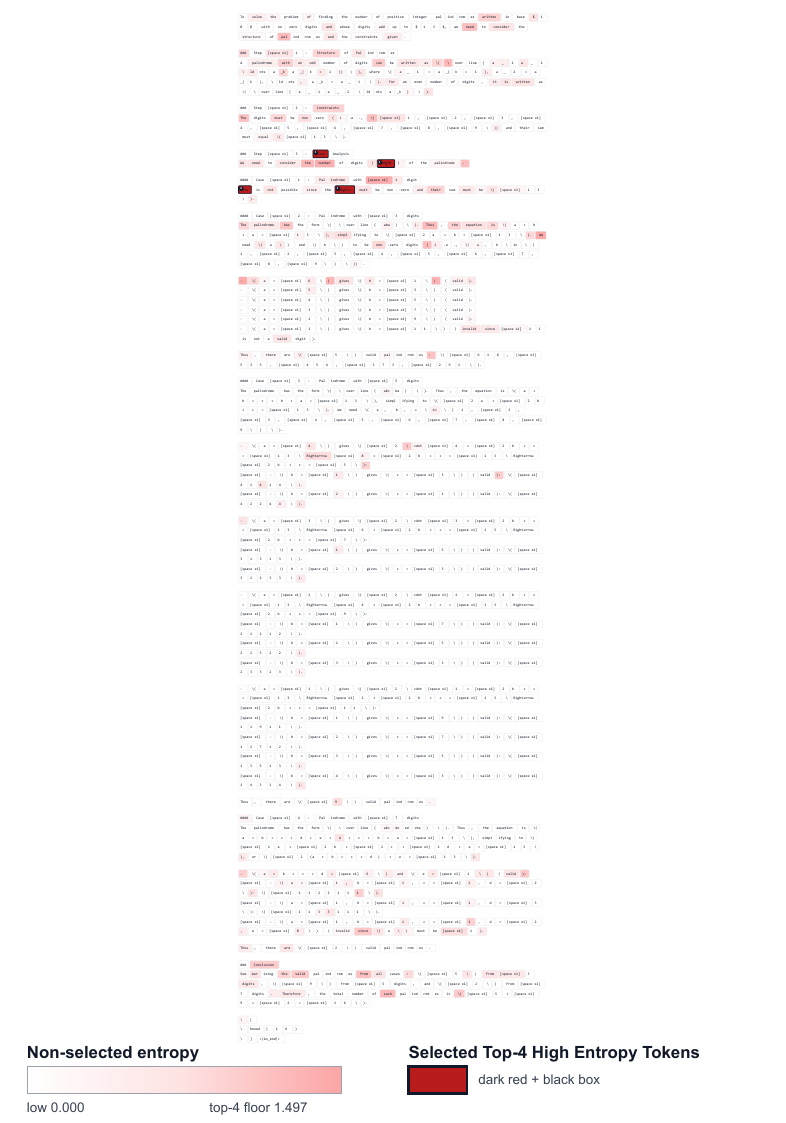}
    \end{minipage}
    \hfill
    \begin{minipage}{0.48\linewidth}
        \centering
        \includegraphics[width=\linewidth]{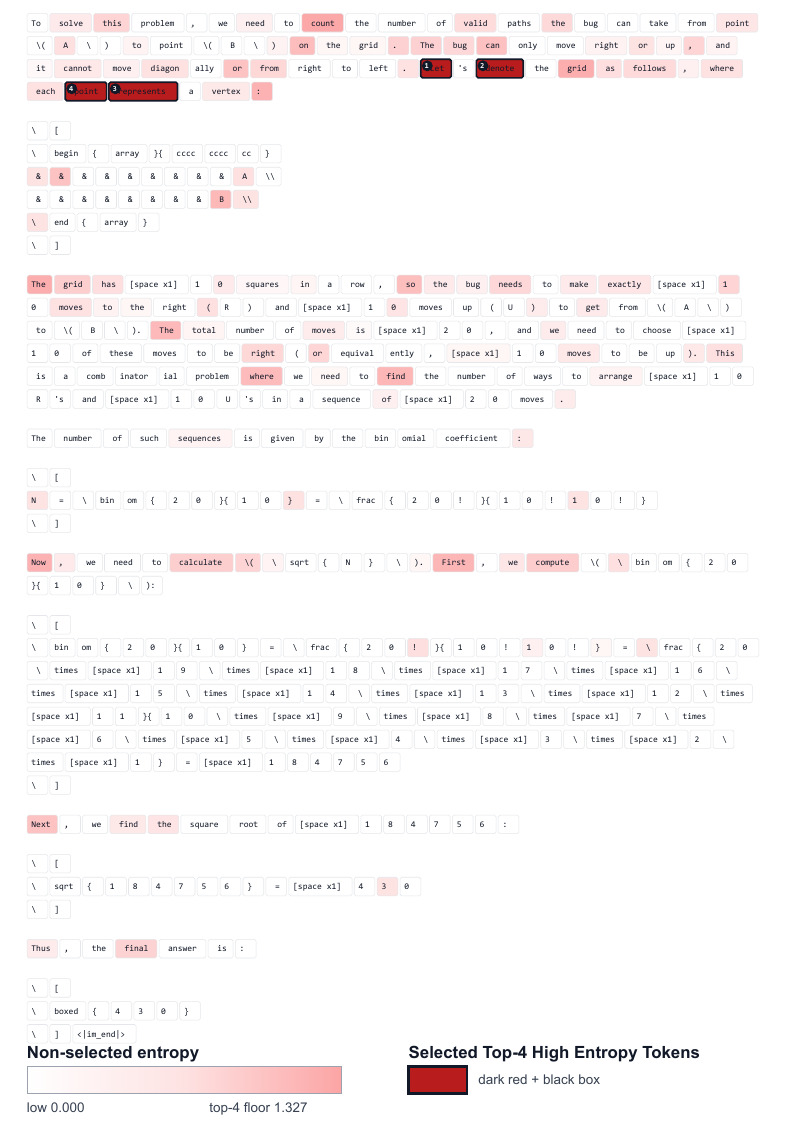}
    \end{minipage}

    \vspace{0.6em}

    \begin{minipage}{0.48\linewidth}
        \centering
        \includegraphics[width=\linewidth]{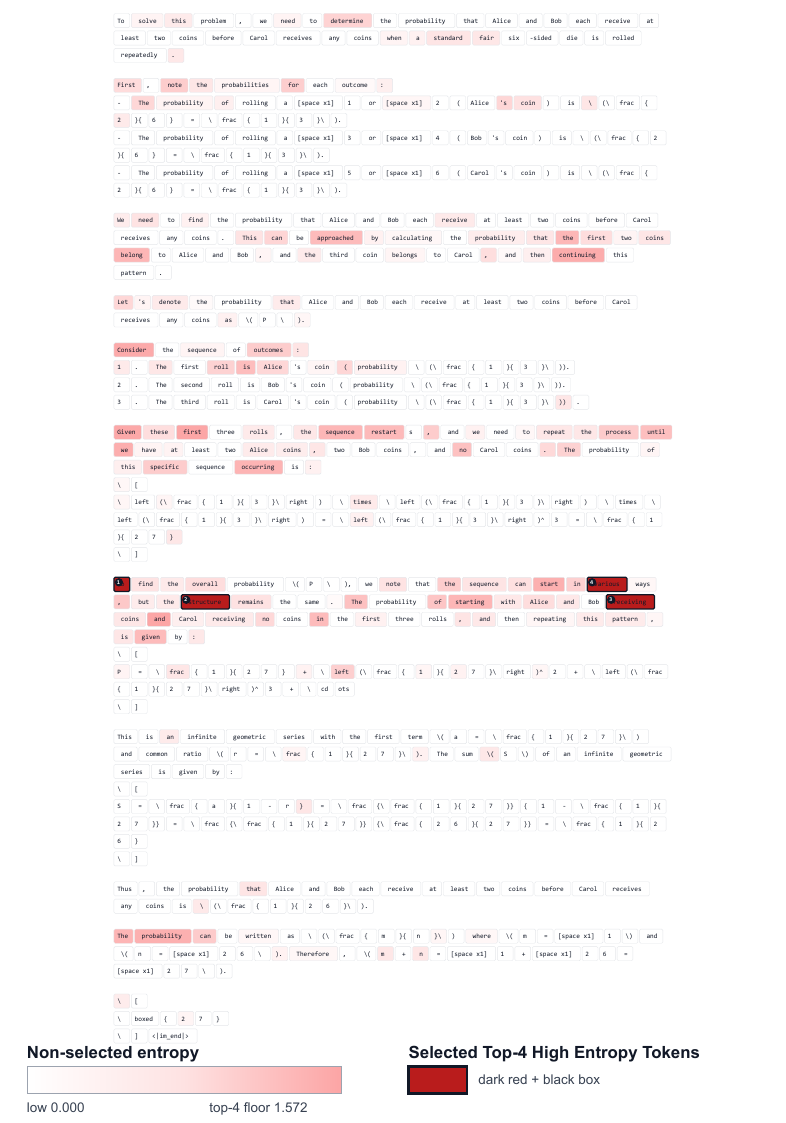}
    \end{minipage}
    \hfill
    \begin{minipage}{0.48\linewidth}
        \centering
        \includegraphics[width=\linewidth]{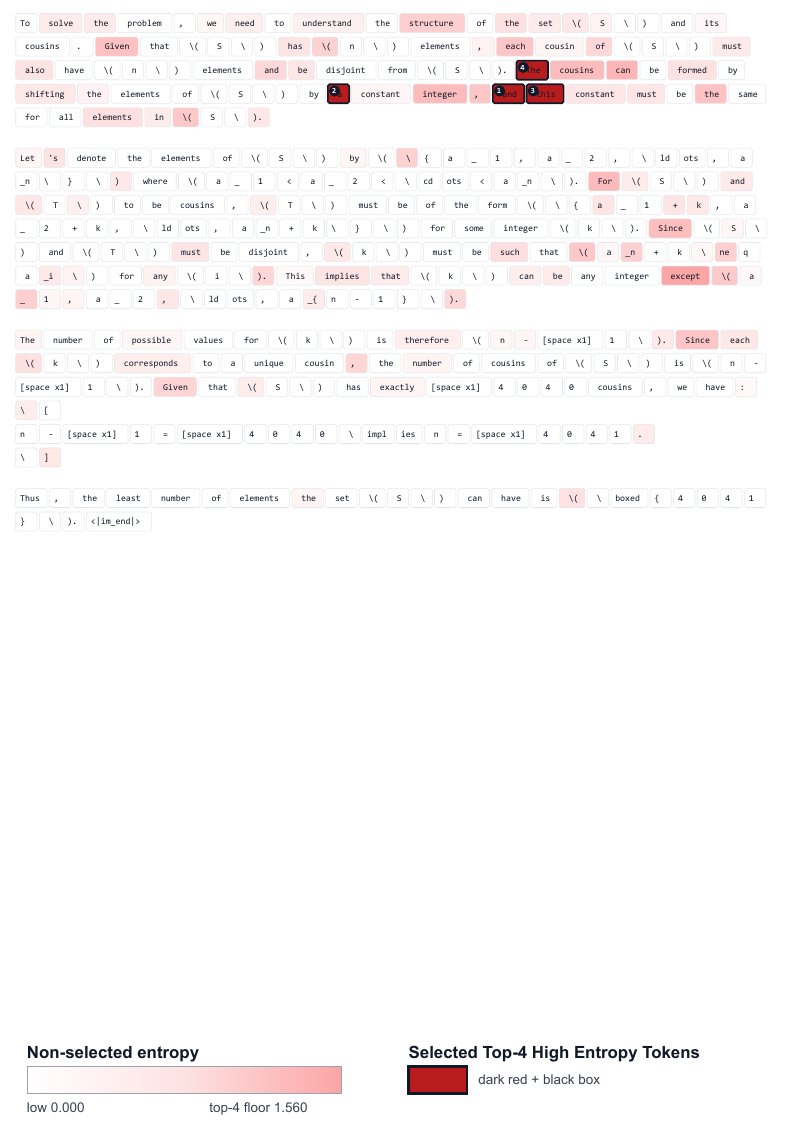}
    \end{minipage}

    \caption{Visualization of token-entropy \textit{localization}. Dark red boxed tokens indicate the selected top-4 high-entropy forking points. These points densely cluster within a narrow segment, monopolizing the search budget and limiting structural diversity.}
    \label{fig:localization}
\end{figure*}

\begin{figure*}[t!]
    \centering
    \begin{minipage}{0.48\linewidth}
        \centering
        \includegraphics[width=\linewidth]{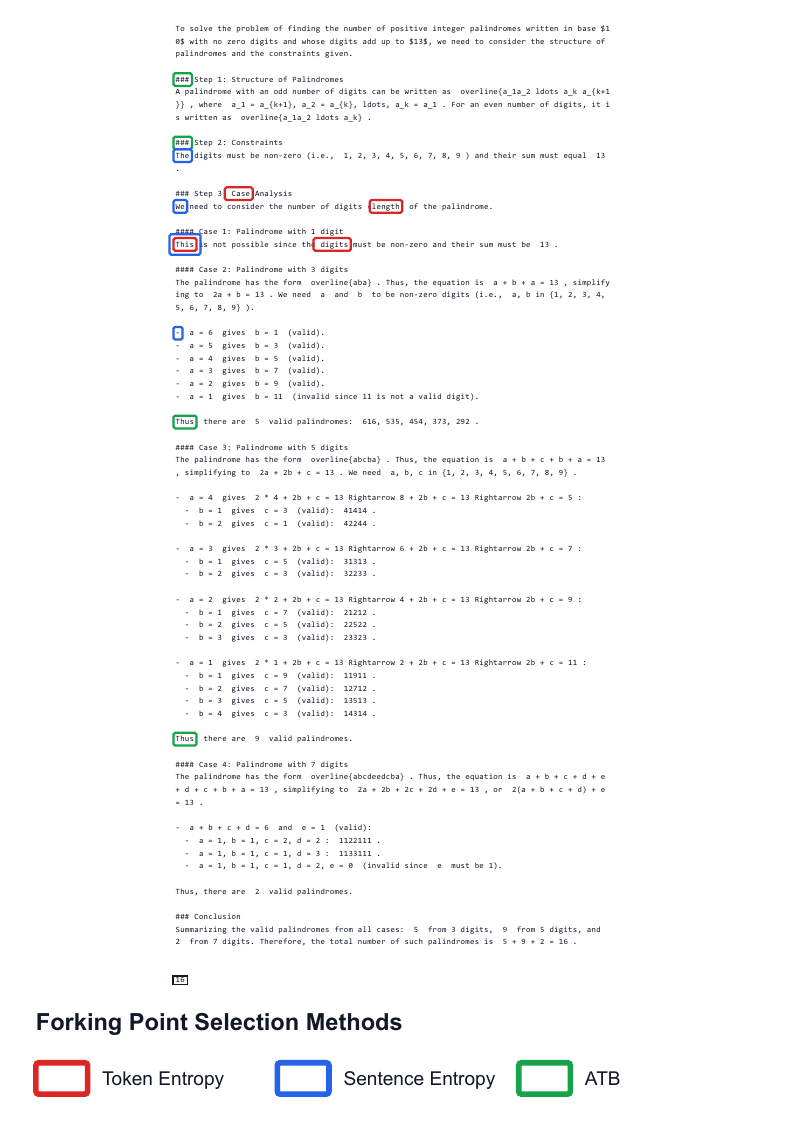}
    \end{minipage}
    \hfill
    \begin{minipage}{0.48\linewidth}
        \centering
        \includegraphics[width=\linewidth]{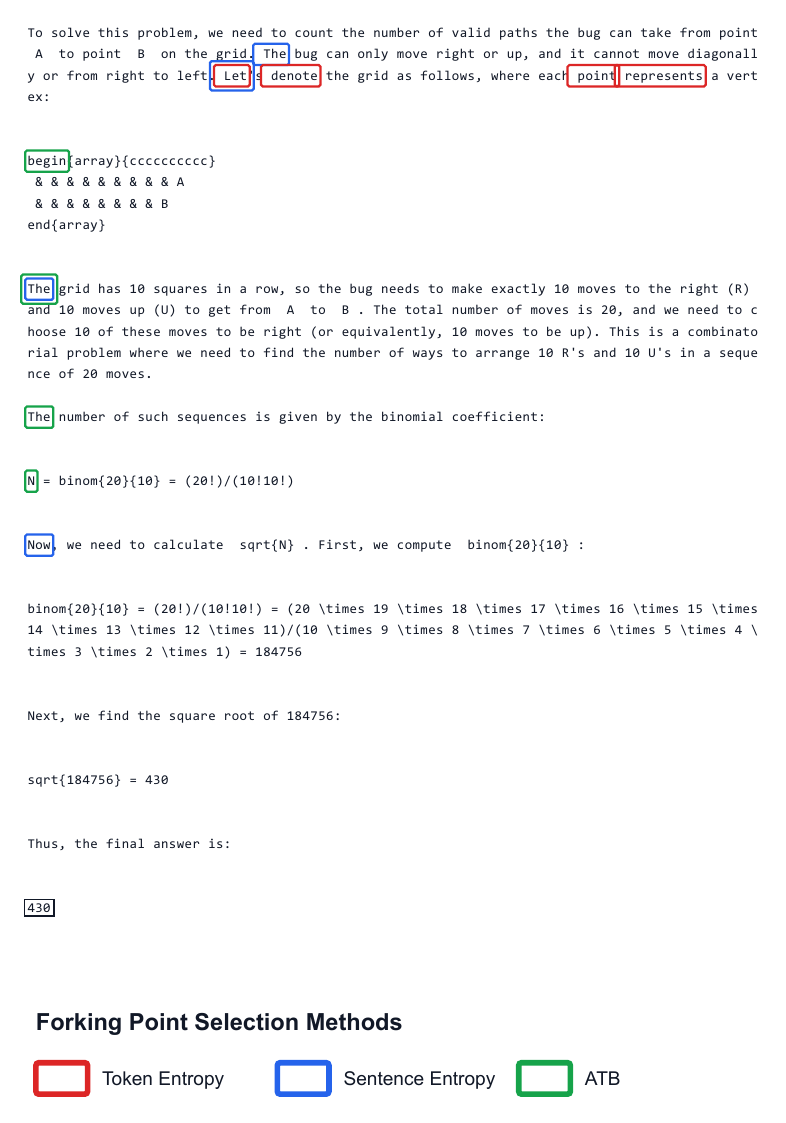}
    \end{minipage}

    \vspace{0.6em}

    \begin{minipage}{0.48\linewidth}
        \centering
        \includegraphics[width=\linewidth]{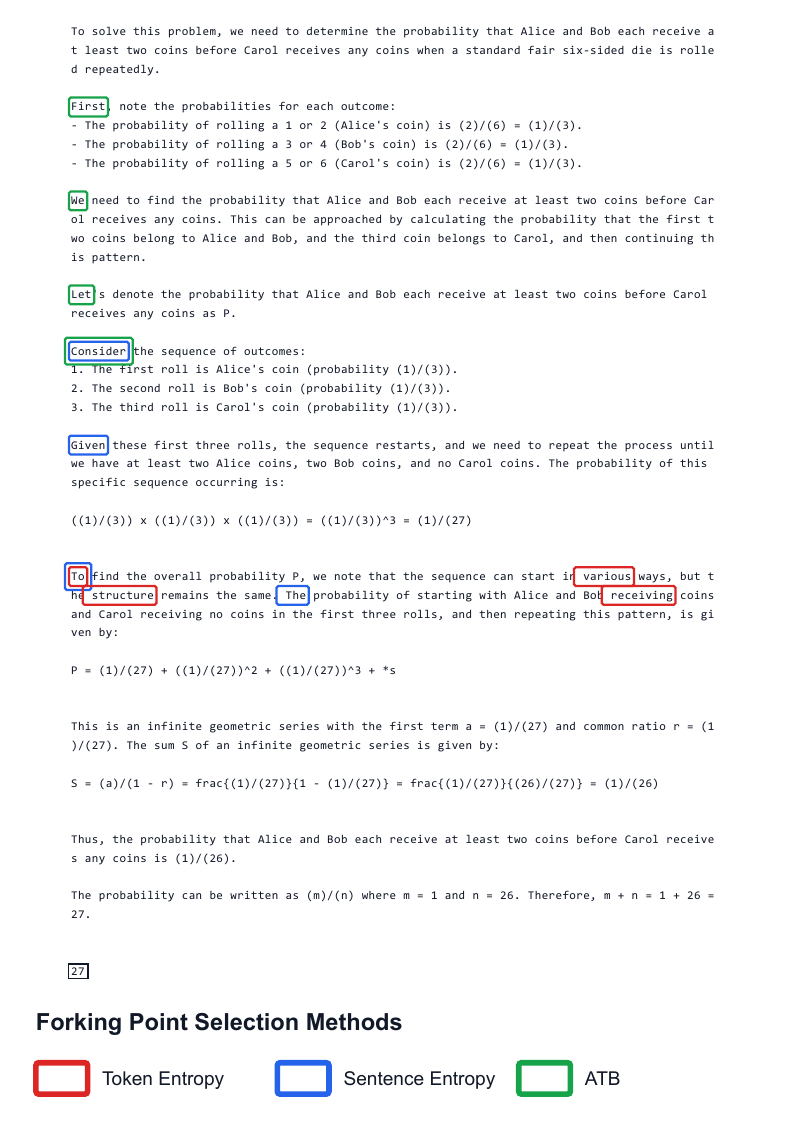}
    \end{minipage}
    \hfill
    \begin{minipage}{0.48\linewidth}
        \centering
        \includegraphics[width=\linewidth]{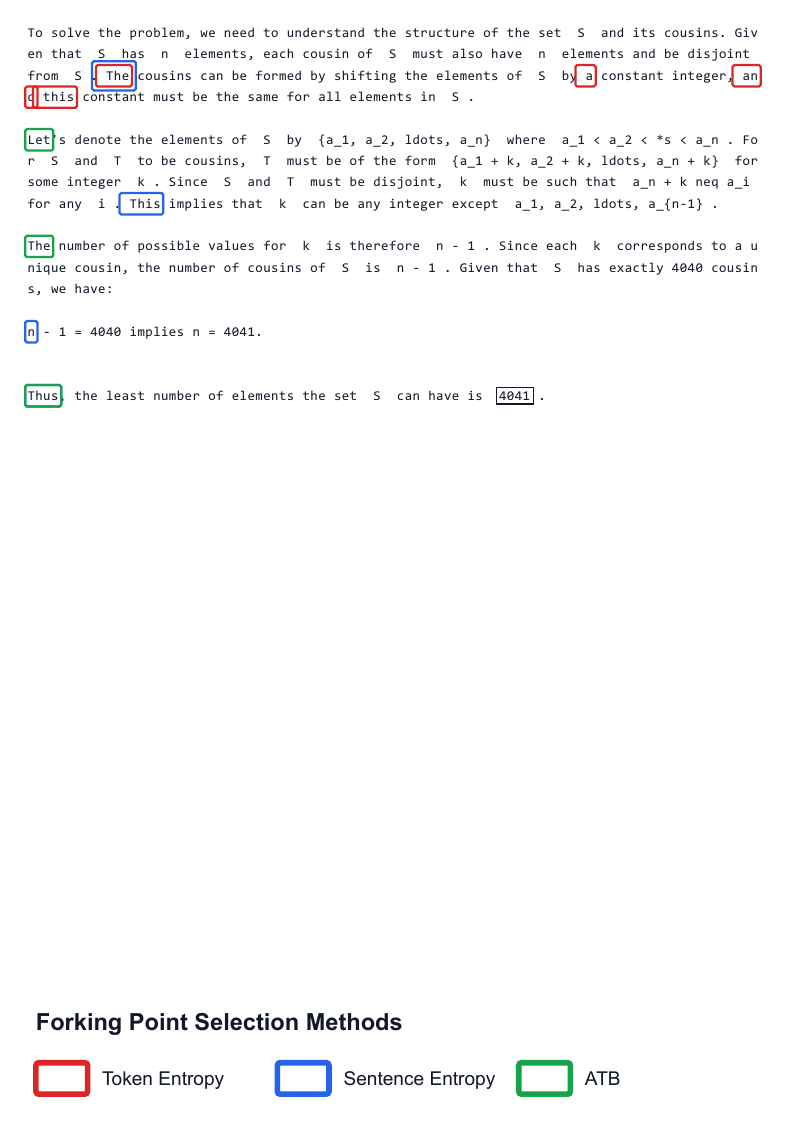}
    \end{minipage}

    \caption{Case studies comparing forking point selections across \texttt{tok-entropy}, \texttt{sent-entropy}, and \texttt{ATB}. Colored boxes indicate the tokens selected by each method, with overlapping boxes denoting agreement between methods. Notably, while \texttt{tok-entropy} suffers from localization, \texttt{sent-entropy} distributes points more evenly across broader semantic levels, whereas \texttt{ATB} often targets deterministic points regardless of uncertainty.}
    \label{fig:forking_point_case_study}
\end{figure*}

\subsection{Formal Definitions of SibDiv}
\label{app:diversity_metrics}

Because recent studies emphasize semantic diversity for effective exploration \citep{jiang2025selective, yao2025diversity}, it is critical to rigorously quantify the divergence of the explored reasoning paths discussed in \hyperref[sec:forking_metric]{Section \ref{sec:forking_metric}}. To this end, we model the generated reasoning tree as a collection of text segments, referred to as \textit{blocks}. Formally, a \textit{block} is defined as a contiguous sequence of reasoning spanning from the root to the first forking point, between two consecutive forking points, or from a final forking point to a leaf. 

Let $\mathcal{F}$ denote the set of all forking points and $\mathcal{B}$ denote the set of all blocks within a single generated reasoning tree. We extract a dense semantic representation $\mathbf{e}_b$ for each block $b \in \mathcal{B}$ utilizing the \texttt{gte-large-en-v1.5} model \citep{zhang-etal-2024-mgte}. Sibling Diversity (SibDiv) is formulated as the complement of the average pairwise cosine similarity among sibling blocks:

\begin{equation}
    \textit{SibDiv} = 1 - \frac{1}{|\mathcal{F}|} \sum_{f \in \mathcal{F}} \frac{\sum_{b_i, b_j \in B(f), \, i < j} \cos(\mathbf{e}_{b_i}, \mathbf{e}_{b_j})}{\binom{|B(f)|}{2}}
\end{equation}

where $B(f) \subset \mathcal{B}$ represents the specific subset of child blocks branching directly from a given forking point $f$, and $\cos(\cdot, \cdot)$ denotes the cosine similarity function. 

A higher SibDiv score indicates that the model successfully generates semantically distinct continuations from its forking points. By aggregating this localized diversity, SibDiv effectively captures the degree of semantic exploration and reasoning diversity across the entire tree.

\subsection{Experimental Details of \hyperref[sec:forking_analysis]{Section~\ref{sec:forking_analysis}}}
\label{app:experimental_details_forking_points}
We evaluated each forking strategy by generating reasoning trees under a fixed inference budget, measuring SibDiv and PassRate. Specifically, we conducted inference on the AIME26 benchmark using the Qwen2.5-3B-Instruct model. To isolate the effect of the forking point selection strategy from the inherent randomness of language model generation, we generated and fixed a single base rollout for each problem. We then applied each selection method to identify $K=4$ distinct forking points along this fixed base trajectory. From each identified forking point, we generated $B=4$ additional branch rollouts. This procedure ensures a strictly controlled environment, yielding exactly 17 terminal leaves (1 base + 16 new branches) per problem across all methods. Finally, we calculated the PassRate and SibDiv across these 17 leaves and averaged the metrics over the entire dataset. To ensure the robustness of our findings, we repeated this entire experimental procedure five independent times and report the final averaged results in \autoref{tab:forking_point_results}.

\begin{table}[t]
    \centering
    \small
    \renewcommand{\arraystretch}{1.2}
    \begin{tabular}{lcc}
        \toprule
        \textbf{Method} 
        & \textbf{PassRate}
        & \textbf{SibDiv} \\
        \midrule
        \texttt{random}                       & 10.0 & 0.0796 \\
        \texttt{fixed-seg}                    & 13.3 & 0.0853 \\
        \texttt{ATB}                          & 14.7 & 0.0874 \\
        \texttt{tok-entropy}                  & 12.0 & \textbf{0.1065} \\
        \texttt{tok-entropy} w/ 5\% dist.     & 14.0 & 0.1041 \\
        \texttt{tok-entropy} w/ 10\% dist.    & \underline{15.3} & 0.1007 \\
        \texttt{sent-entropy}                 & \textbf{17.3} & \underline{0.1049} \\
        \bottomrule
    \end{tabular}
    \caption{Comparison of forking-point selection strategies, including distance-forced token-entropy baselines.}
    \label{tab:distance_forced_tok_entropy}
\end{table}

\subsection{Analyzing the Benefits of Sentence-Level Entropy}
While \texttt{sent-entropy} substantially improves PassRate over \texttt{tok-entropy}, it remains unclear whether this gain arises simply from mitigating localization or from identifying more semantically meaningful decision points. To disentangle these effects, we introduce distance-forced \texttt{tok-entropy} baselines that preserve the original token-level entropy signal while explicitly preventing nearby forking points. Starting from candidates ranked by token entropy, we greedily select forking points and skip any candidate whose distance from an already selected point is within a predefined fraction of the total trajectory length. We consider minimum-distance thresholds of 5\% and 10\%.

As shown in \autoref{tab:distance_forced_tok_entropy}, explicitly enforcing a minimum distance between token-level forking points improves PassRate from 12.0 for standard \texttt{tok-entropy} to 14.0 and 15.3 with the 5\% and 10\% constraints, respectively. This confirms that localization itself is a limiting factor for token-level entropy-based branching. However, both distance-forced variants still underperform \texttt{sent-entropy}, which achieves a PassRate of 17.3. Since these variants mitigate localization while retaining the same token-level entropy signal, the remaining gap suggests that the benefit of \texttt{sent-entropy} cannot be attributed solely to distributing forking points more broadly. Rather, sentence-level entropy provides a more effective signal for identifying semantically meaningful decision points for branching.

\section{Algorithmic and Formal Details of DATPO}
\label{app:datpo_details}

This section provides the formal definitions of the advantage components and the complete training procedure for Difficulty-Adaptive Sentence-entropy Guided Tree-Structured Policy Optimization (DATPO), expanding upon the methodology outlined in \hyperref[sec:methodology]{Section~\ref{sec:methodology}}.

\subsection{Formal Definitions of Value Estimation and Sibling Diversity term}

\paragraph{Monte Carlo State Value ($\hat{V}_{MC}$)}
For a state $s$ corresponding to a specific forking point, let $\mathcal{T}(s)$ denote the set of all terminal blocks descending from $s$. The Monte Carlo state value is formally defined as the expected verifiable reward over these terminal paths:
\begin{equation}
    \hat{V}_{MC}(s) = \frac{1}{|\mathcal{T}(s)|} \sum_{\tau \in \mathcal{T}(s)} r(\tau).
\end{equation}
For a terminal state $s_{\text{end}}$, the set of descending paths is empty, strictly enforcing $\hat{V}_{MC}(s_{\text{end}}) = 0$.

\paragraph{Block-Level Sibling Diversity ($\text{Div}_{\text{sib}}(b)$)}
Let $P(b)$ denote the forking point from which block $b$ originates, and let $\mathcal{B}_{\text{sib}}(b)$ denote the set of all sibling blocks branching from $P(b)$. The sibling-diversity term for a specific block $b$ is computed as its average semantic distance to its siblings:
\begin{equation}
    \text{Div}_{\text{sib}}(b) = 1 - \frac{\sum_{b^\prime \in \mathcal{B}_{\text{sib}}(b) \setminus \{b\}} \cos(\mathbf{e}_b, \mathbf{e}_{b^\prime})}{|\mathcal{B}_{\text{sib}}(b)| - 1}
    \label{equ:sibling_diversity},
\end{equation}
where $\mathbf{e}_b$ is the embedding of block $b$. We define $\text{Div}_{\text{sib}}(b)=0$ when $b$ has no valid sibling block.

While the evaluation metric \textit{SibDiv} (defined in \hyperref[app:diversity_metrics]{Appendix \ref{app:diversity_metrics}}) aggregates pairwise similarities across \textit{all} forking points to report a single tree-level scalar, the reward term $\text{Div}_{\text{sib}}(b)$ is evaluated at the individual block level to provide the advantage augmentation.

\subsection{DATPO Objective Function}
DATPO employs a token-level policy gradient objective mapped across the tree topology. Given a set of generated blocks $\mathcal{B}$ for a prompt $q$, the objective is defined as:
\begin{equation}
\begin{split}
\mathcal{J}_{\mathrm{DATPO}}(\theta) &= \mathbb{E}_{q \sim \mathcal{Q}, \mathcal{B} \sim \pi_{\theta_{\mathrm{old}}}} \Bigg[ \frac{1}{\sum_{b \in \mathcal{B}} |b|} \sum_{b \in \mathcal{B}} \sum_{t=1}^{|b|} \\
&\quad \min \Big( \rho_{b,t}(\theta) \hat{A}(b), \mathrm{clip}\big(\rho_{b,t}(\theta), \\
&\qquad\quad 1-\epsilon, 1+\epsilon\big) \hat{A}(b) \Big) \Bigg],
\end{split}
\label{equ:objective_DATPO}
\end{equation}
where $|b|$ is the sequence length of block $b$, $\rho_{b,t}(\theta) = \frac{\pi_\theta(y_t \mid x_{<t})}{\pi_{\theta_{\mathrm{old}}}(y_t \mid x_{<t})}$ is the importance ratio, and $\hat{A}(b)$ is the block-level augmented advantage.

\subsection{Complete DATPO Algorithm}

\hyperref[alg:datpo]{Algorithm \ref{alg:datpo}} details the end-to-end training procedure. By extending the two-phase rollout strategy into the training framework, the process is structured into four distinct phases: base rollouts, adaptive branch rollouts, block-level advantage estimation, and policy update.

\begin{algorithm}[t!]
\caption{\textbf{DATPO}}
\label{alg:datpo}
\renewcommand{\algorithmicrequire}{\textbf{Input:}}
\renewcommand{\algorithmicensure}{\textbf{Output:}}
\begin{algorithmic}
\REQUIRE Initial policy $\pi_\theta$, Prompts dataset $\mathcal{Q}$, Maximum number of forking points $K_{\max}$, Maximum number of branch rollouts per forking point $B_{\max}$, Base rollouts $N$, Diversity coefficient $\alpha$.
\ENSURE Optimized policy $\pi^*_\theta$

\STATE Sample prompt $q \sim \mathcal{Q}$

\STATE \textbf{Phase 1: Base Rollouts}
\STATE Generate $N$ independent base trajectories $\mathcal{R} = \{\tau^{(1)}, \dots, \tau^{(N)}\}$ via $\pi_\theta(\cdot \mid q)$
\STATE Compute $V(\mathrm{root}) = \frac{1}{N}\sum_{i=1}^N r(\tau^{(i)})$
\STATE Compute adaptive budgets: $\hat{K} = \lceil K_{\max}(1 - V(\mathrm{root})) \rceil$, \ $\hat{B} = \lceil B_{\max}(1 - V(\mathrm{root})) \rceil$

\STATE \textbf{Phase 2: Adaptive Branch Rollouts}
\STATE Initialize tree path set $\mathcal{T} \leftarrow \mathcal{R}$
\IF{$\hat{K} > 0$ \AND $\hat{B} > 0$}
    \FOR{each $\tau \in \mathcal{R}$}
        \STATE Select $\hat{K}$ forking points $\mathcal{F}$ using \texttt{sent-entropy} forking strategy (\autoref{alg:forking_selection})
        \FOR{each $f \in \mathcal{F}$}
            \STATE Extract trajectory prefix $x_f \leftarrow [q; \tau_{1:f}]$
            \STATE Generate $\hat{B}$ branch continuations from prefix $x_f$
            \STATE Add generated branches to $\mathcal{T}$
        \ENDFOR
    \ENDFOR
\ENDIF

\STATE \textbf{Phase 3: Block-level Advantage Estimation}
\STATE Partition $\mathcal{T}$ into a set of contiguous blocks $\mathcal{B}$
\FOR{each block $b \in \mathcal{B}$}
    \STATE Compute $\hat{V}_{MC}(s_{\text{start}}^{(b)})$ and $\hat{V}_{MC}(s_{\text{end}}^{(b)})$
    \STATE $\hat{A}_{\text{base}}(b) = r(b) + \hat{V}_{MC}(s_{\text{end}}^{(b)}) - \hat{V}_{MC}(s_{\text{start}}^{(b)})$
    \STATE Compute $\text{Div}_{\text{sib}}(b)$ using block embeddings (\autoref{equ:sibling_diversity})
    \STATE $\hat{A}(b) = \hat{A}_{\text{base}}(b) + \mathbb{I}(\hat{A}_{\text{base}}(b) > 0) \cdot \alpha \cdot \text{Div}_{\text{sib}}(b)$
\ENDFOR

\STATE \textbf{Phase 4: Policy Update}
\STATE Update $\theta$ by maximizing $\mathcal{J}_{\mathrm{DATPO}}(\theta)$ (\autoref{equ:objective_DATPO}) using advantages $\{\hat{A}(b)\}_{b \in \mathcal{B}}$
\STATE Anneal $\alpha$ according to schedule

\RETURN Optimized policy $\pi^*_\theta$
\end{algorithmic}
\end{algorithm}

\subsection{Comparison with Other Tree-based Methods}
\label{sec:comparison_tree_methods}

To contextualize DATPO within the landscape of recent tree-based RLVR methodologies, we highlight two fundamental distinctions between our approach and existing frameworks such as TreeRL \citep{hou2025treerl} and AttnRL \citep{liu2025attention}.

\paragraph{Structural Robustness in Optimization}
Both TreeRL and AttnRL generate tree-structured rollouts but ultimately flatten these trees into a set of $N(1+K \times B)$ independent sequences to perform sequence-level policy updates \citep{hou2025treerl}. A critical flaw in this unfolding process is that common trajectory prefixes are duplicated across multiple sequences, causing the policy to redundantly update on the same early tokens. To mitigate the resulting overfitting, these methods rely on a heuristic penalty, artificially downweighting the advantage of non-leaf steps by dividing it by the square root of the descending leaf count ($\sqrt{|L(s_n)|}$) \citep{hou2025treerl, liu2025attention}. 

In contrast, DATPO is structurally immune to this redundancy. By strictly partitioning the generated tree into non-overlapping contiguous blocks, every token within the tree belongs to exactly one block. Because DATPO performs explicit block-level updates directly over the tree topology, it naturally prevents duplicate gradient updates on shared prefixes, providing a highly robust credit assignment mechanism without the need for ad-hoc advantage penalization.

\paragraph{Motivation for Difficulty-Adaptive Exploration}
While AttnRL \citep{liu2025attention} similarly incorporates a difficulty-adaptive exploration mechanism, its approach is fundamentally driven by computational efficiency. Specifically, AttnRL reduces the sampling budget for easy problems to prevent inefficient exploration and filter out responses with zero advantages. 

Conversely, DATPO's difficulty-adaptive rollout originates from a completely different motivation: maximizing intrinsic reasoning coverage (pass@$k$). As demonstrated in \hyperref[sec:adaptive_rollout]{Section~\ref{sec:adaptive_rollout}}, adaptive budget allocation is not merely a resource-saving heuristic, but a key algorithmic factor for successfully expanding a model's pass@$k$ without inducing mode collapse. Built entirely around this objective, DATPO further maximizes coverage through the structural superiority of trees (\hyperref[sec:tree_vs_parallel]{Section~\ref{sec:tree_vs_parallel}}) and optimal semantic forking (\hyperref[sec:optimal_forking]{Section~\ref{sec:optimal_forking}}). Thus, while DATPO and AttnRL share superficial similarities in their adaptive nature, they are built upon fundamentally distinct motivations and optimization goals.

\section{Details on Main Experiments (\hyperref[sec:experiments]{Section~\ref{sec:experiments}})}
\label{app:experiments}
\subsection{Experimental Setups}
\label{app:experimental_setups}
\paragraph{Models and Datasets}
\label{par:app_models_datasets}
We use Qwen2.5-3B-Base \citep{qwen2.5} and Qwen3-4B-Base \citep{qwen3} as base models. For training, we use the MATH dataset \citep{hendrycks2021measuring}. From the full set of 12,500 problems, we exclude the 500 problems used as the MATH500 and use the remaining 12,000 problems as the training set.

\paragraph{Evaluation}
\label{par:app_evaluation_metrics}
We evaluate the trained models on mathematical reasoning benchmarks, specifically MATH500 \citep{hendrycks2021measuring}, AIME26, AIME25, AIME24, and AMC23. We report avg@$k$ as the primary evaluation metric across all benchmarks. All evaluations were performed with a temperature of 1.0 and a maximum generation length of 2048. To assess the reasoning coverage and test-time scaling performance of the trained models, we additionally report pass@$k$ and maj@$k$ (majority voting), respectively. We use $k=8$ for MATH500, and $k=64$ for the remaining benchmarks. For robustness, both pass@$k$ and maj@$k$ results are computed by averaging over three independent evaluation runs for each dataset. The final maj@$k$ in \autoref{fig:test_time_scaling} is then reported as the average of these results across all mathematical reasoning benchmarks.

\paragraph{Baselines}
\label{par:app_baselines}
We compare DATPO with several baselines: (1) \textbf{Base}, the base model without any additional training; (2) \textbf{GRPO} \citep{shao2024deepseekmath}, standard GRPO enhanced with the token-level loss and clip-higher strategy from DAPO \citep{yu2025dapo}. Following prior work \citep{liao-etal-2025-enhancing} showing that dynamic sampling can be detrimental for relatively weak reasoning models, we intentionally adopt only this specific subset of DAPO components; (3) \textbf{Dr.GRPO} \citep{liu2025understanding}, a refined variant of GRPO that corrects structural biases in advantage estimation; (4) \textbf{TreeRL} \citep{hou2025treerl}, a tree-based RL method that selects forking points from high-entropy tokens and estimates process-level advantages by combining local and global advantages; and (5) \textbf{AttnRL} \citep{liu2025attention}, another tree-based RL method built upon TreeRL with a focus on improving efficiency by selecting forking points based on attention scores and adopting adaptive sampling and one-step off-policy. For TreeRL and AttnRL, we adopt the same tree-expansion hyperparameters as reported in original papers. To ensure a fair comparison, we control the total number of generated tokens per problem to be comparable across all methods (detailed in \hyperref[app:computational_cost_wall_clock]{Appendix~\ref{app:computational_cost_wall_clock}}).

\begin{table}[t]
\centering
\begin{tabular}{lc}
\toprule
\textbf{Method} & \textbf{Avg. Generated Tokens} \\
\midrule
GRPO & 8,755.7 \\
Dr.GRPO & 8,608.3 \\
TreeRL & 11,029.9 \\
AttnRL & 8,807.3 \\
DATPO (Ours) & 8,952.4 \\
\bottomrule
\end{tabular}
\caption{Average number of generated tokens per question during training across different methods.}
\label{tab:avg_generated_tokens}
\end{table}

\subsection{Implementation Details}
\label{app:training_config}
\paragraph{Sampling and Algorithmic Hyperparameters}
Across all methods, we set the sampling temperature to 1.0 and the maximum generation length to 1024 tokens. For \textbf{GRPO} and \textbf{Dr.GRPO}, we sample 16 responses per problem. For the tree-based baselines, we strictly follow the hyperparameters reported in their respective original papers. Specifically, for \textbf{TreeRL}, we set the tree expansion parameters to $(N, K, B) = (6, 2, 2)$. For \textbf{AttnRL}, we use maximum budgets of $(N, K_{\max}, B_{\max}) = (6, 2, 2)$, select the top 20\% of FCI steps as forking candidates, and apply $\Delta = 4$ and $\lambda = 0.9$ for adaptive batching. For our proposed \textbf{DATPO}, we set the adaptive tree parameters to $(N, K_{\max}, B_{\max}) = (4, 3, 4)$. We intentionally employ a larger maximum branch count ($B_{\max} = 4$) compared to the baselines, as calculating the sibling diversity term ($\text{Div}_{\text{sib}}$) across merely two branches ($B=2$) diminishes its significance. To accommodate this wider branching while keeping the total number of generated tokens per problem comparable to the baseline methods, we adjusted the base and forking budgets accordingly. The diversity coefficient $\alpha$ is initialized to 0.2 and linearly annealed to 0 over the course of training. To compute the semantic diversity representations, we utilize the \texttt{gte-large-en-v1.5} embedding model \citep{zhang-etal-2024-mgte}. Furthermore, to prevent Out-Of-Memory (OOM) errors caused by the expansion of adaptive methods (AttnRL and DATPO), we apply rollout chunking during the branch rollout phase. We generate branch rollouts in chunk sizes of 384 for Qwen2.5-3B-Base and 64 for Qwen3-4B-Base.

\begin{table}[t]
\centering
\footnotesize
\begin{tabular}{lc}
\toprule
\textbf{Method} & \textbf{Wall Clock (Hours)} \\
\midrule

\rowcolor{gray!15}
\multicolumn{2}{c}{\textbf{Parallel Rollout}} \\
GRPO & 26.3 \\
Dr.GRPO & 27.1 \\

\midrule
\rowcolor{gray!15}
\multicolumn{2}{c}{\textbf{Tree-based Rollout}} \\
TreeRL & 56.2 \\
AttnRL & 55.5 \\
DATPO w/o diversity & 56.0 \\
DATPO & 60.3 \\
DATPO w/ one-step off-policy & 60.0 \\

\bottomrule
\end{tabular}
\caption{Wall-clock comparison across different methods.}
\label{tab:main_wall_clock}
\end{table}

\paragraph{Training Hyperparameters}
We formulate the task with a binary verifiable reward, assigning a reward of 1 if the final answer is correct and 0 otherwise. Models are optimized using the AdamW optimizer with $\beta = (0.9, 0.999)$ and a constant learning rate of $5 \times 10^{-6}$. We omit the KL divergence penalty and the clipping ratio is set to $0.2$. For the GRPO baseline enhanced with the clip-higher strategy, we set the upper clip ratio to 0.28 and the lower to 0.2. During training, we update the policy using 16 prompts per step, maintaining a global mini-batch size of 64 and a micro-batch size of 2 for gradient accumulation across all methods. The total effective batch size varies by method: GRPO and Dr.GRPO utilize a fixed batch size of 256; TreeRL employs a batch size of 480 (comprising 96 base rollouts and 384 branch rollouts); whereas the batch sizes for AttnRL and DATPO dynamically adjust per step due to their adaptive nature. Notably, because DATPO performs block-level policy updates, its mini-batch and micro-batch sizes are defined with respect to the number of contiguous blocks rather than full sequences. All experiments were executed on a computing cluster equipped with 8 NVIDIA A100 GPUs, where each individual training run was conducted on a single A100 GPU. Our codebase is developed based on the GRPO-Zero framework\footnote{\url{https://github.com/policy-gradient/GRPO-Zero}}.

\subsection{Computational Cost and Wall-Clock Time}
\label{app:computational_cost_wall_clock}
To ensure a fair comparison, we control the total number of generated tokens per problem to be comparable across all methods. The actual average number of generated tokens per question during training for each method is detailed in \autoref{tab:avg_generated_tokens}. However, despite this token-level equivalence, tree-based methods generally incur a higher wall-clock time compared to standard parallel rollouts, as reported in \autoref{tab:main_wall_clock}. This discrepancy arises from the inherent sequential dependency in tree generation: branch rollouts cannot commence until the base rollouts are fully generated and forking points are explicitly selected, creating a computational bottleneck. Furthermore, our proposed DATPO requires additional overhead to compute block-level embeddings for the diversity term, slightly increasing its wall-clock time relative to other tree-based baselines. We note that adopting the one-step off-policy approach proposed in AttnRL \citep{liu2025attention} can marginally mitigate this latency. \autoref{tab:runtime_breakdown} provides a detailed breakdown of the per-step runtime of DATPO by computational component.

\begin{table}[t]
    \centering
    \small
    \begin{tabular}{lcc}
        \toprule
        \textbf{Component} & \textbf{Time (s)} & \textbf{Total runtime (\%)} \\
        \midrule
        Policy              & 150.44 & 50.14 \\
        Diversity embedding & 13.65  & 4.55  \\
        Update              & 102.17 & 34.05 \\
        Others              & 33.76  & 11.25 \\
        \midrule
        \textbf{Total}      & \textbf{300.02} & \textbf{100.00} \\
        \bottomrule
    \end{tabular}
    \caption{Per-step runtime breakdown of DATPO across different computational components.}
    \label{tab:runtime_breakdown}
\end{table}

\begin{figure}[t]
    \centering
    \includegraphics[width=\linewidth]{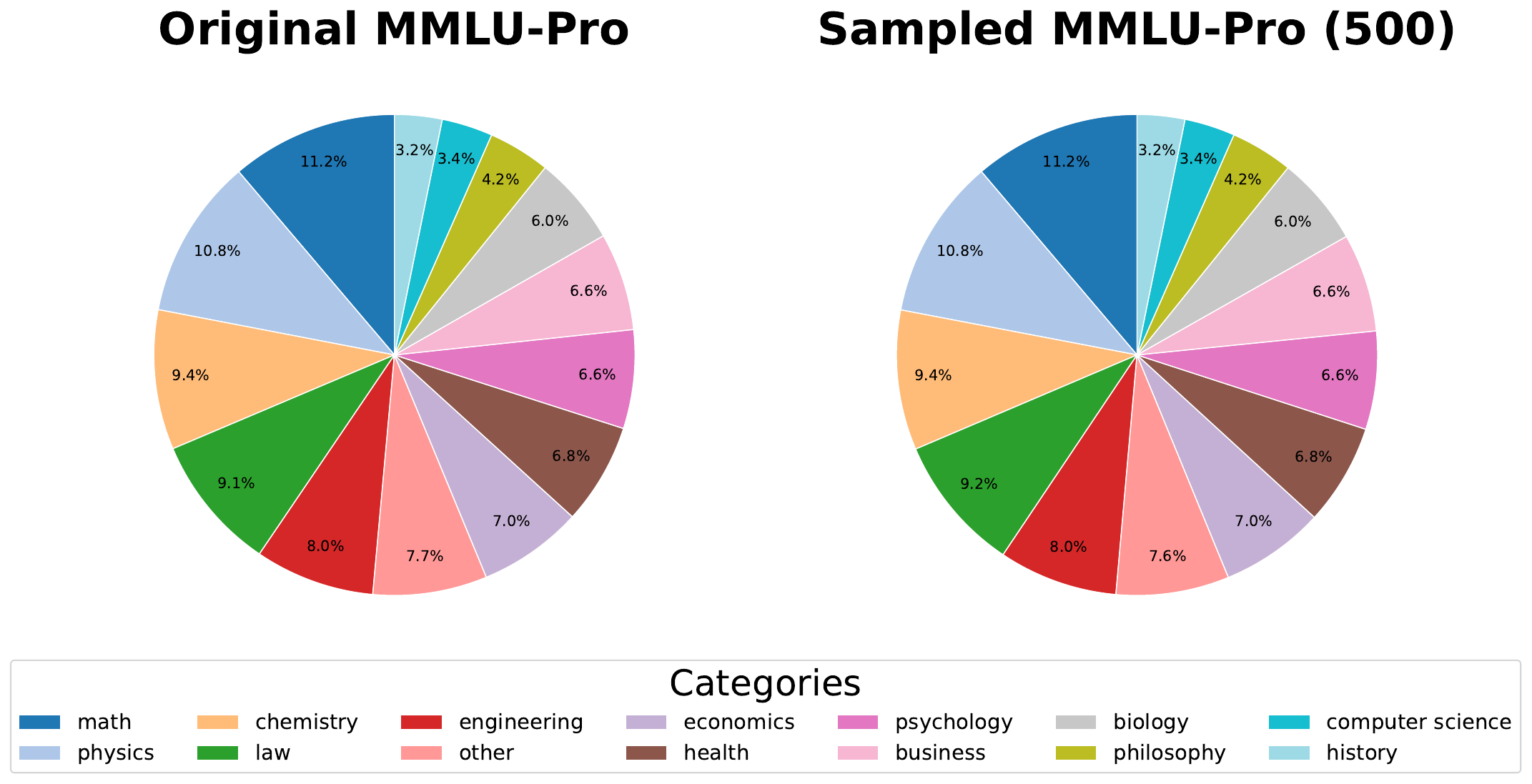}
    \caption{Comparison of category-wise distributions between the original MMLU-Pro dataset and our sampled subset (500 instances). Stratified sampling ensures the original proportions are preserved.}
    \label{fig:mmlu_pro_dist}
\end{figure}

\subsection{Generalization to Out-Of-Domain Tasks}
\label{app:ood_generalization}
To investigate whether the models trained on mathematical reasoning datasets can lead to improvements in reasoning performance across other domains, we additionally conduct out-of-domain evaluations using GPQA-Diamond \citep{rein2023gpqa} and MMLU-Pro \citep{wang2024mmlu}. For MMLU-Pro, we applied stratified sampling to evaluate on a subset of 500 instances, preserving the original proportional distribution across categories such as math, physics, chemistry, engineering, law, and economics. A comparison of the category-wise distributions between the original and sampled datasets is provided in \autoref{fig:mmlu_pro_dist}. We evaluate the models trained with each method using two base models, Qwen2.5-3B-Base and Qwen3-4B-Base, and report the avg@8 metric for both benchmarks.

\autoref{tab:ood_results} presents the results of the out-of-domain evaluation. Overall, DATPO demonstrates consistent performance across both the GPQA-Diamond and MMLU-Pro datasets. Specifically, compared to the strongest baselines, DATPO achieves a +2.4 percentage point improvement on GPQA-Diamond and a marginal -0.1 point difference on MMLU-Pro using the Qwen2.5-3B-Base model. When applied to the Qwen3-4B-Base model, it yields consistent improvements of +1.2 and +0.4 percentage points on the respective benchmarks. These results demonstrate that the enhancements in reasoning capacity observed on in-domain mathematical benchmarks successfully generalize to improved reasoning performance on out-of-domain tasks.

\begin{table}[t]
\centering
\small
\setlength{\tabcolsep}{6pt}
\renewcommand{\arraystretch}{1}

\begin{tabular}{lcc}
\toprule

\textbf{Method} & \textbf{GPQA-Diamond} & \textbf{MMLU-Pro} \\

\midrule

\rowcolor{gray!15}
\multicolumn{3}{c}{\textbf{Qwen2.5-3B-Base}} \\
Base          & 21.1 & 18.4 \\
GRPO          & 25.3 & 32.6 \\
Dr.GRPO       & \underline{26.1} & 31.2 \\
TreeRL        & 25.3 & \underline{33.1} \\
AttnRL        & 25.9 & \textbf{33.2} \\
DATPO (Ours)  & \textbf{28.5} & \underline{33.1} \\

\midrule

\rowcolor{gray!15}
\multicolumn{3}{c}{\textbf{Qwen3-4B-Base}} \\
Base          & 24.2 & 34.8 \\
GRPO          & 35.9 & 55.9 \\
Dr.GRPO       & 38.2 & \underline{57.3} \\
TreeRL        & 36.7 & 55.6 \\
AttnRL        & \underline{38.4} & 56.6 \\
DATPO (Ours)  & \textbf{39.6} & \textbf{57.7} \\

\bottomrule
\end{tabular}

\caption{Out-of-domain evaluation results. We report avg@8 accuracy for each dataset.}
\label{tab:ood_results}
\end{table}

\subsection{Further Ablation Studies}
\label{app:further_ablation}
\paragraph{Effect of Difficulty-Adaptive Rollout}
While \hyperref[sec:main_results]{Section~\ref{sec:main_results}} indirectly demonstrates the benefits of our approach against non-adaptive baselines (e.g., TreeRL), we conduct this ablation to explicitly isolate the impact of the difficulty-adaptive mechanism by evaluating a static, non-adaptive variant of DATPO. Applying our default tree-expansion parameters of $(N, K_{\max}, B_{\max}) = (4, 3, 4)$ to a static rollout would significantly increase the overall training budget. Therefore, to maintain a token consumption comparable to our adaptive framework, we restrict the tree-expansion parameters of this non-adaptive variant to a narrower configuration of $(N, K, B) = (6, 2, 2)$.

As reported in \autoref{tab:ablation_adaptive}, the full DATPO framework outperforms the non-adaptive variant in both avg@$k$ and pass@$k$. The lower pass@$k$ of the non-adaptive model aligns with our empirical analysis in \hyperref[sec:adaptive_rollout]{Section~\ref{sec:adaptive_rollout}}: uniformly expanding the rollout budget across all problems, regardless of their difficulty, can lead to a decrease in pass@$k$. Furthermore, the absence of difficulty adaptation also degrades avg@$k$. We attribute this to a compounding effect during training: the restricted reasoning coverage artificially limits the variety of valid reasoning paths discovered, which consequently undermines the effectiveness of the annealed sibling-diversity term that relies on a rich, diverse set of trajectories to properly optimize the policy.

\begin{table}[t]
\centering
\small
\setlength{\tabcolsep}{6pt}
\renewcommand{\arraystretch}{1.2}
\begin{tabular}{lcc}
\toprule
\multirow{2}{*}{\textbf{Benchmark}}
& \textbf{DATPO w/o}
& \multirow{2}{*}{\textbf{DATPO}} \\
& \textbf{difficulty-adaptive} & \\
\midrule
MATH500 & 62.9 / 81.0 & 63.5 / 81.7 \\
AIME26  & 2.6 / 21.1  & 2.4 / 33.3 \\
AIME25  & 1.7 / 25.6  & 1.6 / 33.3 \\
AIME24  & 5.4 / 31.1  & 4.8 / 35.6 \\
AMC23   & 37.0 / 90.0 & 39.8 / 90.8 \\
\midrule
\textbf{Average} & 21.9 / 49.8 & \textbf{22.4 / 54.9} \\
\bottomrule
\end{tabular}
\caption{Ablation study on difficulty-adaptive rollout. Results are reported as avg@8 / pass@8 for MATH500 and avg@64 / pass@64 for all other benchmarks.}
\label{tab:ablation_adaptive}
\end{table}

\begin{figure}[t]
    \centering
    \includegraphics[width=\columnwidth]{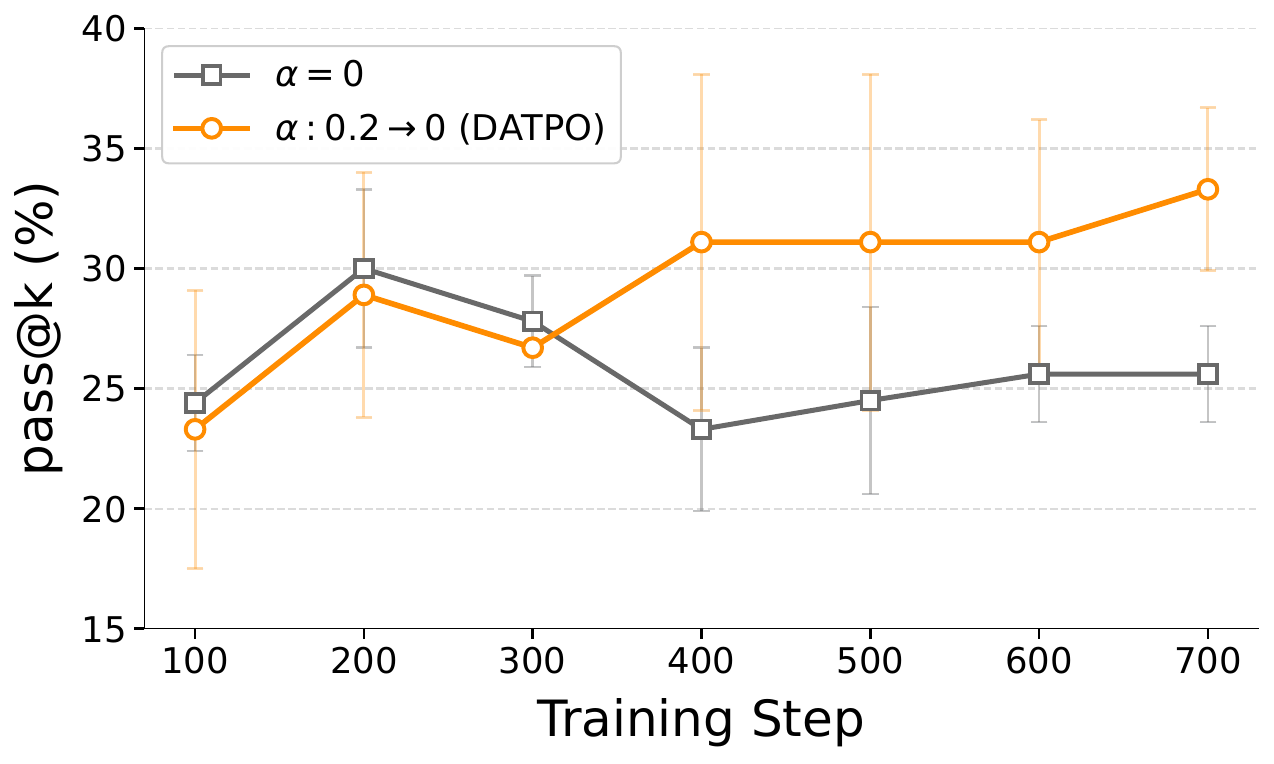}
    \caption{pass@64 on AIME26 across training checkpoints under different diversity-coefficient schedules.}
    \label{fig:diversity-training-dynamics}
\end{figure}

\paragraph{Training Dynamics of the Diversity-Augmented Advantage.}
\hyperref[tab:ablation_alpha]{Table~\ref{tab:ablation_alpha}} reports the effect of the diversity coefficient only at the final checkpoint. To examine how the diversity-augmented advantage affects reasoning coverage throughout training, we additionally evaluate pass@64 on AIME26 at 100-step intervals. We compare training without the diversity term ($\alpha=0$) against DATPO with the default linearly annealed schedule ($\alpha: 0.2 \rightarrow 0$). Each result is averaged over three independent evaluation runs and reported as the mean $\pm$ standard deviation.

As shown in \hyperref[fig:diversity-training-dynamics]{Figure~\ref{fig:diversity-training-dynamics}}, without the diversity term, pass@64 peaks at $30.0\%$ at step 200 and decreases to $25.6\%$ by step 700. In contrast, DATPO exhibits an overall upward trend after the initial training stage and reaches $33.3\%$ at step 700. These results suggest that the diversity-augmented advantage mitigates the early saturation of reasoning coverage and promotes its continued expansion during training.

\paragraph{Effect of Embedding Models}
As noted in \hyperref[app:training_config]{Appendix~\ref{app:training_config}}, DATPO defaults to using \texttt{gte-large-en-v1.5} to compute the block-level sibling-diversity term ($\text{Div}_{\text{sib}}$). To assess the sensitivity of our framework to the choice of the embedding model, we conduct an additional ablation study comparing it against three alternative models of varying scales: \texttt{all-mpnet-base-v2} \citep{reimers-gurevych-2019-sentence}, \texttt{bge-m3} \citep{chen-etal-2024-m3}, and \texttt{gte-base-en-v1.5} \citep{zhang-etal-2024-mgte}.

As reported in \autoref{tab:ablation_embedding_models}, while the 434M-parameter \texttt{gte-large-en-v1.5} model achieves the highest \text{avg@8} accuracy (63.5\%), smaller architectures such as \texttt{all-mpnet-base-v2} (110M) and \texttt{gte-base-en-v1.5} (137M) also yield highly competitive performance, even marginally outperforming the default configuration in \text{pass@8} (81.9\%). These results demonstrate that DATPO is robust to the scale and specific choice of the underlying embedding model. Consequently, the effectiveness of our diversity-augmented advantage appears to stem primarily from capturing stable, relative semantic distance signals among sibling blocks, rather than relying strictly on the absolute capacity or size of the embedding model itself.

\begin{table}[t]
\centering
\small
\setlength{\tabcolsep}{8pt}
\renewcommand{\arraystretch}{1.2}
\begin{tabular}{lccc}
\toprule
\multirow{2}{*}{\textbf{Embedding}} & \multirow{2}{*}{\textbf{Size}} & \multicolumn{2}{c}{\textbf{MATH500}} \\
\cmidrule(lr){3-4}
 &  & \textbf{avg@8} & \textbf{pass@8} \\
\midrule
\texttt{all-mpnet-base-v2} & 110M & 62.9 & 81.9 \\
\texttt{bge-m3}            & 560M & 61.6 & 81.7 \\
\texttt{gte-base-en-v1.5}  & 137M & 63.4 & 81.9 \\
\texttt{gte-large-en-v1.5} & 434M & 63.5 & 81.7 \\
\bottomrule
\end{tabular}
\caption{Ablation study on different embedding models.}
\label{tab:ablation_embedding_models}
\end{table}

\begin{table}[t]
\centering
\small
\setlength{\tabcolsep}{8pt}
\renewcommand{\arraystretch}{1.4}
\begin{tabular}{lccc}
\toprule
\multirow{2}{*}{\textbf{Method}} & \multirow{2}{*}{\textbf{Avg. Tokens}} & \multicolumn{2}{c}{\textbf{MATH500}} \\
\cmidrule(lr){3-4}
 & & \textbf{avg@8} & \textbf{pass@8} \\
\midrule
TreeRL (4,3,4) & 17,766.0 & 62.5 & 81.3 \\
AttnRL (4,3,4) & 12,478.7 & 62.1 & 80.0 \\
DATPO & 8,952.4 & 63.5 & 81.7 \\
\bottomrule
\end{tabular}
\caption{Performance of baselines under wider tree topology. Avg. Tokens denotes the average number of generated tokens per question during training. The values in parentheses specify the tree expansion hyperparameters $(N, K, B)$.}
\label{tab:baseline_topology}
\end{table}

\subsection{Performance of Baselines under Wider Tree Topology}
\label{app:baseline_wider_topology}
As detailed in \hyperref[app:training_config]{Appendix~\ref{app:training_config}}, the tree-based baseline methods (TreeRL and AttnRL) were evaluated using the default tree expansion hyperparameters reported in their original papers, specifically $(N, K, B) = (6, 2, 2)$ for TreeRL and $(N, K_{\max}, B_{\max}) = (6, 2, 2)$ for AttnRL. In contrast, our proposed DATPO utilized a wider branching configuration of $(N, K_{\max}, B_{\max}) = (4, 3, 4)$ to ensure a sufficient number of branch rollouts for computing the sibling-diversity term. Although we rigorously constrained the total number of generated tokens to be comparable across all methods in our main experiments (\hyperref[app:computational_cost_wall_clock]{Appendix~\ref{app:computational_cost_wall_clock}}), one might argue that DATPO's performance gains simply stem from this wider tree topology ($B=4$) rather than its algorithmic design. 

To isolate the impact of our algorithmic design, we conduct an additional experiment where both TreeRL and AttnRL are trained using the identical wider configuration of $(4, 3, 4)$ (or $(4, 3, 4)$ maximum limits for AttnRL).

As reported in \autoref{tab:baseline_topology}, forcing the baselines into this wider topology does not improve their overall performance. We observe three key findings: First, our proposed DATPO achieves the highest performance in both avg@8 and pass@8 while generating the fewest average tokens per problem during training. Second, compared to their original $(6, 2, 2)$ configurations (as reported in \autoref{tab:main_results}), both TreeRL and AttnRL experience performance degradation in some metrics when scaled to the $(4, 3, 4)$ topology. Taken together, these results confirm that the superiority of our framework is driven by the algorithmic improvements of the difficulty-adaptive rollout and diversity-guided exploration, rather than merely relying on wider tree expansion hyperparameters.

\section{Further Related Works}
\subsection{Tree-based Search in Reinforcement Learning}

Tree-based search, famously successful in reinforcement learning milestones like AlphaGo \citep{silver2016mastering}, is now being adapted to the generative landscape of LLMs to unlock more systematic exploration. TreeRL \citep{hou2025treerl} pioneers this with an on-policy framework using entropy-guided branching. To address computational bottlenecks, AttnRL \citep{liu2025attention} improves efficiency by leveraging difficulty-aware adaptive sampling within a one-step off-policy pipeline. Recent research optimizes structural efficiency: TEMPO \citep{tran2025exploiting} leverages prefix-tree structure for branch-aware credit assignment, while TreePO \citep{li2025treepo} maximizes KV-cache reuse. To prevent paths from collapsing into homogeneous reasoning, LATR \citep{xing2025lookahead} explicitly maximizes trajectory-level diversity by dynamically branching and pruning semantically redundant paths via lookahead simulation. Furthermore, TreeAdv \citep{cao2026treeadv} refines credit assignment by combining an entropy-based branching method with a tree-structured advantage redistribution. While these methods primarily utilize tree structures for credit assignment or computational efficiency, our work leverages tree structures with the primary goal of expanding the model's reasoning coverage.

\section{Use of AI assistants}

AI assistants were utilized for code implementation and for refining the linguistic presentation of the manuscript. All AI-generated outputs were carefully reviewed, modified, and validated by the authors.

\end{document}